\newtheorem{theorem}{Theorem}[section]
\newtheorem{corollary}[theorem]{Corollary}
\newtheorem{lemma}[theorem]{Lemma}
\newtheorem{definition}{Definition}[section]%
\begin{document}

\title[Article Title]{Frame Quantization of Neural Networks}




\author[1]{\fnm{Wojciech}\sur{Czaja}}
\author*[1]{ \fnm{Sanghoon}\sur{Na}}\email{shna2020@umd.edu}

\affil[1]{\orgdiv{Department of Mathematics}, \orgname{University of Maryland}, \\
\city{College Park}, \postcode{MD 20742}, 
\country{USA}}




\abstract{We present a post-training quantization algorithm with error estimates relying on ideas originating from frame theory. Specifically, we use first-order Sigma-Delta ($\Sigma\Delta$) quantization for finite unit-norm tight frames to quantize weight matrices and biases in a neural network. In our scenario, we derive an error bound between the original neural network and the quantized neural network in terms of step size and the number of frame elements. We also demonstrate how to leverage the redundancy of frames to achieve a quantized neural network with higher accuracy.}

\keywords{Neural Network Quantization, Post-Training Quantization, Sigma-Delta Quantization, Finite Frames}



\maketitle

\section{Introduction}\label{sec1}
\textit{Quantization} is the process of compressing input from a continuous or large set of values into a small-sized discrete set. It gained popularity in \textit{signal processing}, where one of its primary goals is obtaining a condensed representation of the analogue signal suitable for digital storage and recovery. Examples of quantization algorithms include truncated binary expansion, pulse-code modulation (PCM) and sigma-delta ($\Sigma\Delta$) quantization. Among them, $\Sigma\Delta$ algorithms stand out due to their theoretically guaranteed robustness. Mathematical theories were developed in several seminal works  \cite{daubechies2003approximating,gunturk2004approximating,benedetto2006sigma,benedetto2008complex,benedetto2006second}, and have been carefully studied since, e.g., \cite{gunturk2013sobolev,krahmer2014sigma,saab2018quantization,gunturk2010sigma}.

In recent years, the concept of quantization also captured the attention of the machine learning community. The quantization of \textit{deep neural networks} (DNNs) is considered one of the most effective network compression techniques \cite{gholami2022survey}. 
Computers express parameters of a neural network as 32-bit or 64-bit floating point numbers. In \textit{neural network quantization}, one tries to replace these parameters using compact formats such as 8 bits (or lower), while preserving the architecture and performance of the network. An effective neural network quantization method enables users to run DNNs on portable devices, such as smartphones, without relying on external servers. This benefits storage requirements and helps to avoid privacy-related issues. Due to these reasons, there have been numerous efforts to develop neural network quantization schemes that preserve the model accuracy. 

Neural network quantization can be categorized into 2 classes. The first class is \textit{Quantization-Aware-Training} (QAT). QAT methods \cite{courbariaux2015binaryconnect,hubara2016binarized,rastegari2016xnor,yin2018binaryrelax,kummer2023adaptive,long2023recurrence} retrain the given neural network, by restricting the domain of parameters to a finite set of alphabets. The second class is \textit{Post-Training Quantization} (PTQ). PTQ methods \cite{nagel2020up,wei2022qdrop,nahshan2021loss,guo2022squant,lybrand2021greedy,zhang2023post,banner2019post,yao2022zeroquant,zhang2023spfq,maly2023simple} take a pre-trained neural network and convert it directly into a fixed-point neural network. They demand less computation because they do not require end-to-end training. In addition, unlike QAT methods, PTQ methods do not require the training dataset - they usually require only a small calibration set. These reasons make PTQ methods attractive. For a detailed explanation of QAT and PTQ methods, see \cite{gholami2022survey} and references therein.

Despite the popularity of PTQ methods, unfortunately, most of them lack theoretical error analysis. While some algorithms \cite{lybrand2021greedy,zhang2023post,zhang2023spfq,maly2023simple} are equipped with error estimates, they are typically probabilistic estimates with additional requirements on input distributions. Moreover, their error estimates are proved only for feed-forward networks. However, there are popular neural networks with different architectures, such as ResNet \cite{he2016deep}, which is constructed by stacking together a series of residual blocks. Therefore, the design of a PTQ method with error estimates for different types of neural networks is still an open question.

To address this problem, we return to the methods of quantization from signal processing and investigate how to utilize them in the neural network setting while retaining their theoretical guarantees. We focus on $\Sigma\Delta$ quantization algorithms. While \cite{daubechies2003approximating,gunturk2004approximating} focus on $\Sigma\Delta$ quantization for bandlimited functions, \cite{benedetto2006sigma,benedetto2006second,benedetto2008complex} provide mathematical analysis of $\Sigma\Delta$ quantization for finite frames in $\mathbb{R}^d$ and $\mathbb{C}^d,$ which is suitable for quantization of weight matrices in neural networks. Leveraging the tools from \cite{benedetto2006sigma,benedetto2006second,benedetto2008complex}, we are able to provide a new PTQ algorithm with error estimates. Our contributions are threefold: 

\begin{enumerate}
    \item In Section \ref{ouralg} we propose an algorithm for quantizing layers of a pre-trained neural network using the first-order $\Sigma\Delta$ quantization algorithm for finite unit-norm tight frames in $\mathbb{R}^d.$ Our algorithm does not require any training data, hence a data-free quantization. To the best of our knowledge, our work is the first to utilize the tools from frame theory in this context (recent work  \cite{zhang2024sigma} studies quantizing random Fourier features (RFFs) using $\Sigma\Delta$ algorithm, which is different from our study of DNNs).
    \item We provide error estimates for both $n$-layer feed-forward neural networks and neural networks formed by a series of residual blocks in Section \ref{errorestimates}. These results demonstrate how to control the error using the step size and the number of frame elements. 
    \item We provide numerical results for quantizing neural networks trained on the MNIST data set using our proposed algorithm in Section \ref{numresults}. We quantize all layers and demonstrate that our algorithm can even perform 1-bit quantization with a small loss of accuracy.
\end{enumerate}

\section{Preliminaries}\label{Prelim}
In this paper, all vectors are column vectors in $\mathbb{R}^d$, $d \in \mathbb{N}.$ For a vector $x$, $\|x\|$ denotes the Euclidean norm of $x$. For a matrix $A$, $\|A\|$ denotes the matrix 2-norm of $A$, which is the largest singular value of $A$.

We introduce the mathematical formulation of neural networks used in this paper. First, a fully connected \textit{feed-forward neural network} (FNN) with $n$ layers is a function $f:\mathbb{R}^{m_0}\rightarrow\mathbb{R}^{m_n}$ which acts on data $x \in \mathbb{R}^{m_0}$ via
\begin{align}\label{fnn}
        f(x)=h^{[n]}\circ\sigma\circ h^{[n-1]}\circ\cdots\circ\sigma\circ h^{[1]}(x),
\end{align}
where $h^{[l]}(x)=W_l x+b_l$ with \textit{weight matrix} $W_l \in \mathbb{R}^{m_l \times m_{l-1}}$ and \textit{bias} $b_l \in \mathbb{R}^{m_l}$, $l=1,2,\cdots,n,$ and nonlinear activation function $\sigma$ which acts on each component of a vector. Here, we omit the activation for the last layer because the choice of the final activation can be different from $\sigma$. For example in classification problems, softmax activation function is used in the last step, while $\sigma$ is often chosen as the \textit{Rectified Linear Unit} (ReLU) function, defined as
$x \mapsto \max\{0,x\}$ for $x\in\mathbb{R}$. Note that ReLU is a 1-Lipchitz continuous function attaining $0$ at $x=0$. Throughout this paper we will assume activation function $\sigma$ is an $L$-Lipschitz continuous function with $\sigma(0)=0,$ which makes ReLU a special case. In addition, we will also assume that $m_i \ge 3$ for all $i$'s. This is a natural assumption since typical neural network architectures set the number of neurons at each layer to be greater than $2$.

Next we introduce the mathematical formulation for residual blocks and residual neural networks used in this paper. We adopt the mathematical formulation from Chapter 3.8.2 in \cite{grohs2022mathematical}. A \textit{residual neural network} with $n$ residual blocks is a function $g:\mathbb{R}^k \rightarrow \mathbb{R}^k$ defined as
\begin{align}\label{residual}
    g(x)=z^{[n]}\circ \sigma \circ z^{[n-1]} \circ \cdots \circ \sigma \circ z^{[1]}(x),
\end{align}
where each \textit{residual block} $z^{[i]}$ has a structure $z^{[i]}(x)=W_{i,2}\sigma(W_{i,1}x+b_i)+x,$ with weight matrices $W_{i,1},W_{i,2} \in \mathbb{R}^{k\times k}$ and bias $b_i \in \mathbb{R}^k.$ For residual networks, we only consider ReLU as the activation function.

For a neural network $F(x)$, the \textit{quantized neural network} $F_Q(x)$ is defined as a network formed by replacing weight matrices and biases of $F(x)$ with quantized weight matrices and quantized biases from any quantization algorithm. For example, for a fully connected FNN $f(x)$ in (\ref{fnn}), assume that $Q_i$'s and $c_i$'s are quantized weight matrices and quantized biases of $W_i$'s and $b_i$'s, respectively. Then,
\begin{align}\label{quant_fnn}
    f_Q(x)=h^{[n]}_Q\circ \sigma \circ h^{[n-1]}_Q \circ \sigma \cdots \circ \sigma \circ h^{[1]}_Q(x),
\end{align}
where $h^{[i]}_Q(x)=Q_ix+c_i$ for $i=1,\cdots,n.$ Similarly for a residual network $g(x)$ in (\ref{residual}),
\begin{align}\label{quant_res}
    g_Q(x)=z^{[n]}_Q\circ \sigma \circ z^{[n-1]}_Q \circ \cdots \circ \sigma \circ z^{[1]}_Q(x),
\end{align}
where $z^{[i]}_Q(x)=Q_{i,2}\sigma(Q_{i,1}x+c_i)+x$ for $i=1,\cdots,n.$ Here, $Q_{i,2}$'s, $Q_{i,1}$'s and $c_i$'s are quantized weight matrices and quantized biases of $W_{i,2}$'s, $W_{i,1}$'s and $b_i$'s respectively. 

Note that if we write $\widetilde{W}_l=(W_l,b_l)$ and $\Tilde{x}=(x^T,1)^T,$ then $h^{[l]}(x)=W_l x+b_l=\widetilde{W}_l\Tilde{x}$ holds. Therefore, without loss of generality, we will omit the biases in our analysis. This means we assume $b_i=0$ for all $i$'s, so $h^{[i]}(x)$'s in (\ref{fnn}) can be regarded as $h^{[i]}(x)=W_ix$ and $z^{[i]}(x)$'s in (\ref{residual}) can be regarded as $z^{[i]}(x)=W_{i,2}\sigma(W_{i,1}x)+x$.

Next we recall some basics from frame theory. In this paper, we only discuss finite frames for $\mathbb{R}^d$. For the discussion of finite frames for general Hilbert spaces, see \cite{casazza2012finite}. We say a set $\{e_1,\cdots,e_N\}$ in $\mathbb{R}^d$ is a \textit{finite frame} for $\mathbb{R}^d$ if there exist $0<A\le B<\infty$, such that $A\|x\|^2 \le \sum_{i=1}^{N} | \langle x,e_i\rangle|^2 \le B\|x\|^2 $ holds for all $x\in \mathbb{R}^d.$ The constants $A$ and $B$ are called $\textit{frame bounds,}$ and $\{ \langle x,e_i\rangle \}_{i=1}^{N}$ are called the $\textit{frame coefficients}$ of $x$ with respect to frame $\{e_1,\cdots,e_N\}$. A frame $\{e_1,\cdots,e_N\}$ in $\mathbb{R}^d$ is called $\textit{tight}$ if $A=B.$ If a finite tight frame $F=\{e_1,\cdots,e_N\}$ satisfies $\|e_i\|=1$ for all $i$, then we say $F$ is a $\textit{finite unit-norm tight frame}$ (FUNTF). For a finite frame $\{e_1,\cdots,e_N\}$ in $\mathbb{R}^d$, the linear function $S:\mathbb{R}^d \rightarrow \mathbb{R}^d$ defined by
\begin{align}\label{frameoperator}
    Sx=\sum_{i=1}^{N} \langle x,e_i \rangle e_i,
\end{align}
is the \textit{frame operator} of $F$.
Note that if $F=\{e_1,\cdots,e_N\}$ is a finite frame for $\mathbb{R}^d$ with frame bounds $A$ and $B$, then it is easy to see that $S$ is a $d\times d$ positive definite matrix satisfying $AI_d \le S \le BI_d,$ where $I_d$ is the identity matrix of $\mathbb{R}^d$. Therefore, its inverse $S^{-1}$ exists and it satisfies $\frac{1}{B}I_d \le S^{-1} \le \frac{1}{A} I_d.$ This inverse operator $S^{-1}$ is called the \textit{dual frame operator}. 
Multiplying (\ref{frameoperator}) by $S^{-1}$  gives us an atomic decomposition of an arbitrary $x\in\mathbb{R}^d$:
\begin{align}\label{dual expansion}
    x=S^{-1}Sx=S^{-1}\Big\{\sum_{i=1}^{N} \langle x,e_i \rangle e_i\Big\}=\sum_{i=1}^{N} \langle x,e_i \rangle S^{-1}e_i.
\end{align}
It is straightforward to check $\{S^{-1}e_1,\cdots,S^{-1}e_N\}$ is a frame for $\mathbb{R}^d$ with frame bounds $1/B$ and $1/A.$ We say that $\{S^{-1}e_1,\cdots,S^{-1}e_N\}$ is the \textit{canonical dual frame} of $F$. For detailed proofs of the aforementioned statements, see \cite{casazza2012finite}.
When $F=\{e_1,\cdots,e_N\}$ is a tight frame for $\mathbb{R}^d$ with frame bound $A$, then  $S=AI_d$ and $S^{-1}=\frac{1}{A}I_d.$ In this case, (\ref{dual expansion}) gives us the frame expansion $ x=\frac{1}{A}\sum_{i=1}^{N}\langle x,e_i \rangle e_i$, for any $x \in\mathbb{R}^d.$ Moreover, if $F$ is a FUNTF, then it is known that $A=N/d$ holds \cite{benedetto2008complex}. Therefore, we have the frame expansion $x=\frac{d}{N}\sum_{i=1}^{N}\langle x,e_i \rangle e_i$ for any $x \in\mathbb{R}^d.$

\section{Frame quantization}\label{vectorquant}

In vector quantization, first-order $\Sigma\Delta$ quantization algorithm has a uniform upper bound on the error in the case of finite frames \cite{benedetto2006sigma} and it is known to outperform PCM most of the time for realistic settings \cite{benedetto2010pointwise}. Since a matrix can be interpreted as a stack of column vectors, $\Sigma\Delta$ becomes an option for quantizing a weight matrix. In this section, we introduce first-order $\Sigma\Delta$ quantization algorithms for finite frames for $\mathbb{R}^d.$  We begin with some common definitions in \cite{benedetto2006second,benedetto2008complex,casazza2012finite,zhang2023post,benedetto2006sigma}. Given $K\in \mathbb{N}$ and $\delta>0,$ the \textit{midrise quantization alphabet} $A^{\delta}_K$ is defined as
    \begin{align}\label{alphabet}
        A^{\delta}_{K}= \Big\{(-K+\frac{1}{2})\delta,(-K+\frac{3}{2})\delta,\cdots,-\frac{1}{2}\delta,\frac{1}{2}\delta,\cdots,(K-\frac{1}{2})\delta\Big\}.
    \end{align}
    The \textit{$2K$-level midrise uniform scalar quantizer} $Q$ with step size $\delta$ is defined as
    \begin{align*}
        Q(u) = \arg{\min_{q \in A^{\delta}_{K}} |u-q|}.
    \end{align*}
Therefore, $Q(u)$ is the element in $A^{\delta}_{K}$ which is closest to $u$. 
\medskip
\begin{definition}{\normalfont \cite{benedetto2006sigma} }\label{firstorder}
    Given $K\in \mathbb{N}$ and $\delta>0,$ define the midrise quantization alphabet $A^{\delta}_K$ and the $2K$-level midrise uniform scalar quantizer $Q$ with stepsize $\delta$ . Let $F=\{e_1,\cdots,e_N\}$ be a finite frame for $\mathbb{R}^{d}$ where $d\ge3$ and let $p$ be a permutation of $\{1,2,\cdots,N\}.$ For a given input sequence $\{x_1,x_2,\cdots,x_N\}$, the associated first-order \emph{$\Sigma\Delta$ quantization} is defined by the iteration:
    \begin{align*}
        u_n=u_{n-1}+x_{p(n)}-q_n, \notag\\
        q_n=Q(u_{n-1}+x_{p(n)}),
    \end{align*}
    for $n=1,2,\cdots,N$ where $u_0=0.$ This produces the quantized sequence $\{q_1,\cdots,q_N\}$ and an auxiliary sequence $\{u_0,\cdots,u_N\}$ of state variables.
\end{definition}
\medskip
In the above it is possible to consider nonzero initial condition $\|u_0\| < \delta/2,$ but for simplicity, we will only consider the case  $u_0=0$ as in \cite{benedetto2006sigma}. 

We now describe the vector quantization process. Let $F=\{e_1,\cdots,e_N\}$ be a finite frame for $\mathbb{R}^{d}$ and let $p$ be a permutation of $\{1,2,\cdots,N\}.$ Choose an arbitrary vector $x \in \mathbb{R}^d$ and represent it as $ x=\sum_{i=1}^{N}x_iS^{-1}e_i$ with frame expansion in (\ref{dual expansion}). We say $\Bar{x}=\sum_{i=1}^{N}q_iS^{-1}e_{p(i)}$ is the \textit{quantized expansion} of $x$, where $\{q_1,\cdots,q_N\}$ are the quantized sequence from the first-order $\Sigma\Delta$ quantization in Definition \ref{firstorder}.
The first-order $\Sigma\Delta$ scheme in Definition \ref{firstorder} is an iterative scheme that depends heavily on the choice of permutation $p$. Given a  finite frame $F=\{e_1,\cdots,e_N\}$  for $\mathbb{R}^{d}$, its \textit{frame variation} with respect to a permutation $p$ of $\{1,2,\cdots,N\}$ is defined as $\sigma(F,p):=\sum_{i=1}^{N-1}\|e_{p(i)}-e_{p(i+1)}\|.$ Frame variation is a measurement that captures the interdependencies between the frame elements resulting from the choice of $p$. It is an important quantity that reflects the role of permutation $p$ in the error estimates for first-order $\Sigma\Delta$ quantization.

We provide an error bound on the approximation error $\|x-\Bar{x}\|$. Here, we only state the result for the case of a FUNTF. For general results, see \cite{benedetto2006sigma} and \cite{benedetto2008complex}.
\medskip
\begin{theorem}{\normalfont \cite{benedetto2006sigma} }\label{sigmadelta error}
    Let $F=\{e_1,\cdots,e_N\}$ be a finite unit-norm tight frame for $\mathbb{R}^{d}$, and let $p$ be a permutation of $\{1,2,\cdots,N\}.$ Let $x\in\mathbb{R}^d$ satisfy $\|x\| \le (K-1/2)\delta$ and have the frame expansion $x=\sum_{i=1}^{N}\langle x,e_i \rangle S^{-1} e_i,$ where $S^{-1}$ is the inverse frame operator for $F$. Then, $\Bar{x}$ satisfies the approximation error
    \begin{align*}
        \|x-\Bar{x}\| \le \frac{\delta d}{2N}(\sigma(F,p)+1).
    \end{align*}
\end{theorem}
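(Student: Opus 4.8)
The plan is to reduce the vector approximation error to the scalar $\Sigma\Delta$ recursion by means of the FUNTF frame expansion, and then to combine an Abel summation by parts with the stability of the first-order scheme. First I would use that for a FUNTF the frame operator is $S=(N/d)I_d$, so $S^{-1}=(d/N)I_d$, and the frame expansions reindexed by the permutation $p$ become
\[
    x=\frac{d}{N}\sum_{i=1}^{N}\langle x,e_{p(i)}\rangle\,e_{p(i)},
    \qquad
    \bar{x}=\frac{d}{N}\sum_{i=1}^{N} q_i\,e_{p(i)}.
\]
Writing $x_{p(i)}=\langle x,e_{p(i)}\rangle$ and invoking the recursion of Definition~\ref{firstorder} in the form $x_{p(n)}-q_n=u_n-u_{n-1}$, this gives $x-\bar{x}=\frac{d}{N}\sum_{i=1}^{N}(u_i-u_{i-1})\,e_{p(i)}$. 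Summation by parts, together with the initial condition $u_0=0$, rewrites the sum as $\sum_{i=1}^{N-1}u_i\,(e_{p(i)}-e_{p(i+1)})+u_N\,e_{p(N)}$, so by the triangle inequality and $\|e_{p(N)}\|=1$,
\[
    \|x-\bar{x}\|\le\frac{d}{N}\Bigl(\sum_{i=1}^{N-1}|u_i|\,\|e_{p(i)}-e_{p(i+1)}\|+|u_N|\Bigr).
\]

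The remaining ingredient is the stability estimate $|u_n|\le\delta/2$ for every $n$, which I would establish by induction on $n$. The base case $u_0=0$ is immediate. For the inductive step, assume $|u_{n-1}|\le\delta/2$; then by Cauchy--Schwarz, $\|e_{p(n)}\|=1$, and the hypothesis $\|x\|\le(K-\tfrac12)\delta$,
\[
    |u_{n-1}+x_{p(n)}|\le\tfrac{\delta}{2}+|\langle x,e_{p(n)}\rangle|\le\tfrac{\delta}{2}+\|x\|\le\tfrac{\delta}{2}+\bigl(K-\tfrac12\bigr)\delta=K\delta.
\]
Since $|u_{n-1}+x_{p(n)}|\le K\delta$, the argument of $Q$ lies in the non-overflow range of the $2K$-level midrise quantizer on $A^{\delta}_K$, where the scalar rounding error is at most $\delta/2$; hence $|u_n|=|u_{n-1}+x_{p(n)}-Q(u_{n-1}+x_{p(n)})|\le\delta/2$, closing the induction.

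Substituting $|u_i|\le\delta/2$ into the displayed bound and recalling $\sigma(F,p)=\sum_{i=1}^{N-1}\|e_{p(i)}-e_{p(i+1)}\|$ yields $\|x-\bar{x}\|\le\frac{d}{N}\bigl(\frac{\delta}{2}\sigma(F,p)+\frac{\delta}{2}\bigr)=\frac{\delta d}{2N}(\sigma(F,p)+1)$, which is the claim. I expect the only delicate point to be the stability step: one must check that the hypothesis $\|x\|\le(K-\tfrac12)\delta$ is exactly what keeps $u_{n-1}+x_{p(n)}$ inside $[-K\delta,K\delta]$, the interval on which the midrise quantizer has uniform error $\delta/2$ with no clipping; everything else is routine bookkeeping with the frame expansion and the Abel summation.
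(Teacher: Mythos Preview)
Your argument is correct and is precisely the standard proof from \cite{benedetto2006sigma}: FUNTF expansion with $S^{-1}=(d/N)I_d$, rewriting $x-\bar{x}$ via the recursion $x_{p(n)}-q_n=u_n-u_{n-1}$, Abel summation by parts, and the stability bound $|u_n|\le\delta/2$ established inductively using the no-overflow condition $|u_{n-1}+x_{p(n)}|\le K\delta$. Note, however, that the present paper does not supply its own proof of this theorem; it quotes the result directly from \cite{benedetto2006sigma} and uses it as a black box, so there is nothing further to compare against here.
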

\medskip
To obtain a quantized expansion with a small error, it is desirable to choose a permutation $p$ that makes frame variation $\sigma(F,p)$ small. In \cite{wang2008sigma}, the author proved that for a set $\{e_1,\cdots,e_N\} \subset [-\frac{1}{2},\frac{1}{2}]^{d},$ with $d\ge 3$, there exists a permutation $p$ of $\{1,2,\cdots,N\} $ which satisfies $\sum_{i=1}^{N-1}\|e_{p(i)}-e_{p(i+1)}\|\le 2\sqrt{d+3}N^{1-\frac{1}{d}}-2\sqrt{d+3}.$ Note that for a unit-norm frame $F=\{e_1,\cdots,e_N\}$ for $\mathbb{R}^{d}$, the set $\frac{1}{2}F=\{\frac{1}{2}e_1,\cdots,\frac{1}{2}e_N\}$ is a subset of $[-\frac{1}{2},\frac{1}{2}]^{d}.$ In \cite{benedetto2008complex}, the authors combined these results to obtain the following result.
\medskip
\begin{theorem}{\normalfont \cite{benedetto2008complex}}\label{subopt}
    Let $F=\{e_1,\cdots,e_N\}$ be a unit-norm frame for $\mathbb{R}^{d}$, $d\ge3.$ Then, there exists a permutation $p$ of $\{1,2,\cdots,N\} $ such that
    \begin{align}\label{subopt_bound}
        \sigma(F,p) \le 4\sqrt{d+3}N^{1-\frac{1}{d}}-4\sqrt{d+3}.
    \end{align}
\end{theorem}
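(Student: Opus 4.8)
The plan is to reduce the statement directly to the combinatorial traveling-salesman-type bound of \cite{wang2008sigma} quoted just above, by a simple rescaling. First I would observe that since every frame element satisfies $\|e_i\|=1$, each coordinate of $e_i$ has absolute value at most $1$ (because $\|v\|_\infty \le \|v\|_2$ for all $v\in\mathbb{R}^d$), so the rescaled set $\tfrac12 F=\{\tfrac12 e_1,\ldots,\tfrac12 e_N\}$ lies entirely inside the cube $[-\tfrac12,\tfrac12]^d$. This containment is the only place where the unit-norm hypothesis is used, and it is exactly what makes Wang's result applicable to $\tfrac12 F$.

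Next I would apply the bound of \cite{wang2008sigma} to the point set $\tfrac12 F\subset[-\tfrac12,\tfrac12]^d$ (legitimate since $d\ge3$): there is a permutation $p$ of $\{1,2,\ldots,N\}$ with
\[
\sum_{i=1}^{N-1}\left\|\tfrac12 e_{p(i)}-\tfrac12 e_{p(i+1)}\right\| \le 2\sqrt{d+3}\,N^{1-\frac1d}-2\sqrt{d+3}.
\]
Then I would pull the scalar $\tfrac12$ out of each norm on the left-hand side and multiply both sides by $2$, which yields
\[
\sigma(F,p)=\sum_{i=1}^{N-1}\left\|e_{p(i)}-e_{p(i+1)}\right\| \le 4\sqrt{d+3}\,N^{1-\frac1d}-4\sqrt{d+3}.
\]
The permutation $p$ returned by Wang's construction is the one we keep, and this is precisely the claimed inequality \eqref{subopt_bound}.

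There is essentially no obstacle here beyond correctly invoking the cited theorem: the entire content is carried by \cite{wang2008sigma}, and the present statement is just its specialization to (rescaled) unit-norm frames, with the factor of $2$ in the rescaling accounting for the factor of $2$ difference between the two bounds. The only step needing a sentence of justification is the containment $\tfrac12 F\subset[-\tfrac12,\tfrac12]^d$. If one insisted on a self-contained argument, the genuine work would be to reprove Wang's recursive space-filling construction of a short Hamiltonian path through $N$ points in a $d$-cube, but that lies outside the scope of what this theorem requires.
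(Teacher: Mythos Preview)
Your argument is correct and matches exactly the approach the paper indicates in the paragraph preceding the theorem: rescale the unit-norm frame by $\tfrac12$ so that it lies in $[-\tfrac12,\tfrac12]^d$, apply Wang's bound from \cite{wang2008sigma}, and multiply by $2$ to recover the stated inequality. The paper does not give a separate proof beyond this sketch, simply attributing the combination to \cite{benedetto2008complex}.
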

\medskip

Next, we estimate the approximation error in terms of $\delta,d,$ and $N$.
\medskip
\begin{corollary}\label{besterror}
    Let $F=\{e_1,\cdots,e_N\}$ be a finite unit-norm tight frame for $\mathbb{R}^{d}$, $d\ge 3,$ and let $x\in\mathbb{R}^d$ satisfy $\|x\| \le (K-1/2)\delta$. Let $p$ be a permutation of $\{1,2,\cdots,N\}$ that satisfies (\ref{subopt_bound}). Let $\Bar{x}$ be the quantized expansion. Then, the approximation error satisfies
    \begin{align*}
        \|x-\Bar{x}\| \le \frac{\delta d}{2N} (4\sqrt{d+3}N^{1-\frac{1}{d}}-4\sqrt{d+3} +1).
    \end{align*}
\end{corollary}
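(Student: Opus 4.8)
The plan is to obtain the result as an immediate consequence of chaining Theorem \ref{sigmadelta error} together with Theorem \ref{subopt}. First I would observe that since $F=\{e_1,\cdots,e_N\}$ is a finite unit-norm tight frame for $\mathbb{R}^d$, it is in particular a unit-norm frame for $\mathbb{R}^d$, and by hypothesis $d\ge 3$; hence Theorem \ref{subopt} guarantees the existence of a permutation $p$ of $\{1,2,\cdots,N\}$ satisfying (\ref{subopt_bound}), which is precisely the permutation posited in the statement of the corollary.

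Next I would invoke Theorem \ref{sigmadelta error} with this $F$, this permutation $p$, and the given $x$. The hypotheses are met: $F$ is a FUNTF, $\|x\|\le (K-1/2)\delta$, and $x=\sum_{i=1}^{N}\langle x,e_i\rangle S^{-1}e_i$ is its frame expansion. The theorem then yields, for the quantized expansion $\bar{x}$,
\[
    \|x-\Bar{x}\| \le \frac{\delta d}{2N}\big(\sigma(F,p)+1\big).
\]

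Finally I would substitute the frame-variation bound (\ref{subopt_bound}) into the right-hand side. Since $\frac{\delta d}{2N}>0$, multiplying the inequality $\sigma(F,p)+1 \le 4\sqrt{d+3}N^{1-\frac{1}{d}}-4\sqrt{d+3}+1$ through by $\frac{\delta d}{2N}$ preserves its direction, producing exactly the claimed estimate. Because the argument is a pure composition of two previously established results together with monotonicity under multiplication by a positive scalar, I do not expect any genuine obstacle; the only point needing (minimal) care is confirming that all the hypotheses of both theorems are simultaneously in force — unit-norm tightness and $d\ge 3$ for Theorem \ref{subopt}, and the norm constraint on $x$ plus the FUNTF property for Theorem \ref{sigmadelta error} — which indeed they are by assumption.
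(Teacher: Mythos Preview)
Your proposal is correct and matches the paper's own proof, which simply reads ``Apply Theorem~\ref{subopt} to Theorem~\ref{sigmadelta error}.'' You have just spelled out explicitly the hypothesis-checking and the monotonicity step that the paper leaves implicit.
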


\begin{proof}
    Apply Theorem \ref{subopt} to Theorem \ref{sigmadelta error}.
\end{proof}

Corollary \ref{besterror} shows that we can bound the approximation error by only using the variables $\delta,d,$ and $N$. We use this result to obtain error bounds between a neural network and its quantized version in Section 5.

\section{Frame Quantization for Neural Networks}\label{ouralg}
In this section, we explain our method to quantize a weight matrix $W_i \in \mathbb{R}^{m_i \times m_{i-1}}$. Let 
$F_i=\{e^i_1,\cdots,e^i_{N_i}\} $ be a FUNTF for $ \mathbb{R}^{m_i}$ that we are using for quantization. Write $W_i=[w^i_1, w^i_2, \cdots, w^i_{m_{i-1}}]$ where $ w^i_j$ is a $m_i\times 1$ column vector for $j=1,2,\cdots,m_{i-1}.$ First, choose appropriate positive integer $K_i$ and step size $\delta_i>0$ that satisfy
\begin{align}\label{howtochoosedelta} 
    \max_{j=1,\cdots,m_{i-1}} \|w^i_j\| \le (K_i-\frac{1}{2})\delta_i.
\end{align}
Next, we find a permutation $p_i$ of $\{1,\cdots,N_i\}$ that satisfies (\ref{subopt_bound}). The algorithm to find such permutations is described in \cite{benedetto2008complex,wang2008sigma}, so we omit the details. Then, we compute the quantized expansions of $w^i_j$'s, using frame $F_i$, constant $K_i,$ and step size $\delta_i$ for $i=1,\cdots,n$. Let $q^i_j$ be the quantized expansion of $w^i_j.$ Since we are using finite unit-norm tight frame, the dual frame operator $S^{-1}$ is $\frac{m_i}{N_i}I_{m_i}$, where $I_{m_i}$ is the identity matrix of $ \mathbb{R}^{m_i}$. Therefore $q^i_j$'s can be written as
\begin{align}\label{reconstruct}
    q^{i}_{j}=\sum_{k=1}^{N_i} q^i_{j,k}S^{-1}e^i_{p_i(k)}=\frac{m_i}{N_i}\sum_{k=1}^{N_i}q^{i}_{j,k}e^{i}_{p_i(k)}.
\end{align}
Note that the first-order $\Sigma\Delta$ quantization in Definition \ref{firstorder} forces $q^i_{j,k} \in A^{\delta_i}_{K_i}$ for all $(j,k)\in\{1,\cdots,m_{i-1}\}\times\{1,\cdots,N_i\}.$ Therefore, at most $2K_i$ values are candidates for $q^i_{j,k}$. In our scenario, we store $C_i=[q^i_{j,k}]_{1\le j \le m_{i-1},1\le k \le N_i} \in  (A^{\delta_i}_{K_i})^{m_{i-1}\times N_i}$. When we need a quantized neural network, then we use the finite unit-norm tight frames $F_i$'s and matrices $C_i$'s to reconstruct a network using (\ref{reconstruct}).

Current neural network quantization methods convert a weight matrix $W=[w_{ij}]_{1\le i \le m,1\le j \le n} \in \mathbb{R}^{m\times n}$ into a quantized matrix $Q=[q_{ij}]_{1\le i \le m,1\le j \le n}$,  where each $q_{ij}$ uses fewer bits compared to the corresponding $w_{ij}$. Note that $W$ and $Q$ have the same dimensions. Our method differs from typical approaches because we are storing matrices $C_i$'s that have different dimensions from the weight matrices $W_i$'s. Due to their different dimensions, we will not say $C_i$'s are quantized weights. Instead, we will say that the matrix $Q_i=[q^i_1, \cdots, q^i_{m_{i-1}}]\in \mathbb{R}^{m_i \times m_{i-1}}$, where the columns are the quantized expressions $q^{i}_{j}$'s, is the \textit{quantized weight matrix} of $W_i\in \mathbb{R}^{m_i \times m_{i-1}}$ for $i=1,\cdots,n.$ 

\begin{algorithm}
\caption{Frame Quantization}
\label{framequant}
\begin{algorithmic}
\Require Weight matrix $W_i \in \mathbb{R}^{m_i \times m_{i-1}}$ and FUNTF $F_i=\{e^i_1,\cdots,e^i_{N_i}\} $ for $ \mathbb{R}^{m_i}.$
\State 1. Set level $K_i\in\mathbb{N}$ and stepsize $\delta_i$ that satisfy $\max_{j=1,\cdots,m_{i-1}} \|w^i_j\| \le (K_i-\frac{1}{2})\delta_i.$
\State 2. Find a permutation $p_i$ of $\{1,\cdots,N_i\}$ that satisfies (\ref{subopt_bound}).
\For{$j=1,\cdots,m_{i-1}$}
    \State 1. Compute frame expansion $w^{i}_{j}=\frac{m_i}{N_i}\sum_{k=1}^{N_i}w^{i}_{j,k}e^{i}_{k}.$
    \State 2. Use first-order $\Sigma\Delta$ quantization algorithm \ref{firstorder} with $K_i$ and $\delta_i$ to compute 
    $$ q^{i}_{j}=\frac{m_i}{N_i}\sum_{k=1}^{N_i}q^{i}_{j,k}e^{i}_{p_i(k)}.$$
\EndFor
\State 3. Set $C_i=[q^i_{j,k}]_{1\le j \le m_{i-1},1\le k \le N_i} \in  (A^{\delta_i}_{K_i})^{m_{i-1}\times N_i}.$
\Ensure Quantized matrix $Q_i=[q^i_1, \cdots, q^i_{m_{i-1}}]\in \mathbb{R}^{m_i \times m_{i-1}}. $ Store $C_i.$

\end{algorithmic}
\end{algorithm}

Note that the above algorithm uses first-order $\Sigma\Delta$ quantization for the column vectors of the weight matrix $W_i.$ On the other hand, we may consider applying first-order $\Sigma\Delta$ quantization algorithm for the row vectors of the weight matrix $W_i$. In this case, we apply Algorithm \ref{framequant} to $W_i^T$.

Note that we can also quantize biases by adding them to the final columns of weight matrices. This means we can also quantize $b_i$'s while we quantize the weight matrices $W_i$'s by applying Algorithm \ref{framequant} to matrices $\widetilde{W_i}=(W_i,b_i)$'s.

\section{Error Estimates}\label{errorestimates}
We begin with some auxiliary lemmas, which play an important role in proving error estimates between a neural network and its approximation by a quantized network.
\medskip
\begin{lemma}\label{matrixerrorbound}
    Let $Q_i$ be the quantized matrix for weight matrix $W_i \in \mathbb{R}^{m_i\times m_{i-1}}$ using Algorithm \ref{framequant}. Then, 
    \begin{align*}
    \|W_i-Q_i\| \le 2\sqrt{2}\delta_i m_i\sqrt{m_{i-1}m_i}N_i^{-\frac{1}{m_i}},
    \end{align*}
    where $\delta_i$ is the step size and $N_i$ is the number of elements in the frame $F_i=\{e^i_1,\cdots,e^i_{N_i}\}$ used in Algorithm \ref{framequant}.   
\end{lemma}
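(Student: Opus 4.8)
The plan is to deduce the matrix-$2$-norm estimate from the columnwise vector estimate already available in Corollary \ref{besterror}, passing through the Frobenius norm as an intermediary. Recall that $W_i = [w^i_1,\dots,w^i_{m_{i-1}}]$ and that the quantized matrix is $Q_i = [q^i_1,\dots,q^i_{m_{i-1}}]$, where each $q^i_j$ is the quantized expansion of the column $w^i_j$ produced by Algorithm \ref{framequant}. Hence $W_i - Q_i$ is precisely the matrix whose $j$-th column is $w^i_j - q^i_j$, and the first task is to bound each of these column errors.

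First I would check that the hypotheses of Corollary \ref{besterror} hold for every column. Step 1 of Algorithm \ref{framequant} chooses $K_i$ and $\delta_i$ so that $\|w^i_j\| \le (K_i - \tfrac{1}{2})\delta_i$ for all $j$, and Step 2 selects a permutation $p_i$ obeying (\ref{subopt_bound}); since $F_i$ is a FUNTF for $\mathbb{R}^{m_i}$ with $m_i \ge 3$, Corollary \ref{besterror} applies with $d = m_i$, $N = N_i$, $\delta = \delta_i$, giving
$$\|w^i_j - q^i_j\| \le \frac{\delta_i m_i}{2N_i}\left(4\sqrt{m_i+3}\,N_i^{1-\frac{1}{m_i}} - 4\sqrt{m_i+3} + 1\right)$$
uniformly in $j$. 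Then I would use the standard inequality $\|A\| \le \|A\|_F$ together with the observation that, for a matrix with $m_{i-1}$ columns $a_j$, $\|A\|_F = \big(\sum_j \|a_j\|^2\big)^{1/2} \le \sqrt{m_{i-1}}\,\max_j\|a_j\|$, applied to $A = W_i - Q_i$, to obtain
$$\|W_i - Q_i\| \le \sqrt{m_{i-1}}\;\max_{1\le j\le m_{i-1}}\|w^i_j - q^i_j\| \le \sqrt{m_{i-1}}\cdot\frac{\delta_i m_i}{2N_i}\left(4\sqrt{m_i+3}\,N_i^{1-\frac{1}{m_i}} - 4\sqrt{m_i+3} + 1\right).$$

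The remaining step is to simplify the right-hand side into the stated closed form, and this is where the hypothesis $m_i \ge 3$ is used twice. Since $m_i \ge 3$ we have $\sqrt{m_i+3} > \tfrac14$, so $-4\sqrt{m_i+3}+1 < 0$ and the parenthesized expression is at most $4\sqrt{m_i+3}\,N_i^{1-1/m_i}$; and again since $m_i \ge 3$ we have $m_i + 3 \le 2m_i$, hence $\sqrt{m_i+3} \le \sqrt{2m_i} = \sqrt{2}\sqrt{m_i}$. Substituting both bounds and collecting $N_i^{1-1/m_i}/N_i = N_i^{-1/m_i}$ yields $\|W_i - Q_i\| \le 2\sqrt{2}\,\delta_i m_i\sqrt{m_{i-1}}\sqrt{m_i}\,N_i^{-1/m_i} = 2\sqrt{2}\,\delta_i m_i\sqrt{m_{i-1}m_i}\,N_i^{-1/m_i}$, which is the claim. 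I do not expect a genuine obstacle here: the work is entirely bookkeeping, and the only points requiring care are verifying that Algorithm \ref{framequant} guarantees the hypotheses of Corollary \ref{besterror} and correctly tracking the two places where $m_i \ge 3$ enters the final simplification.
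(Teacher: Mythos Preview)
Your proof is correct and follows essentially the same approach as the paper: obtain the columnwise bound from Corollary \ref{besterror}, pick up the factor $\sqrt{m_{i-1}}$, and simplify using $m_i\ge 3$ in the two places you indicate. The only cosmetic difference is how the $\sqrt{m_{i-1}}$ appears---the paper writes $\|W_i-Q_i\|=\max_{\|x\|=1}\|(W_i-Q_i)x\|\le \max_{\|x\|=1}\sum_j |x_j|\,\|w^i_j-q^i_j\|$ and applies Cauchy--Schwarz to $\sum_j|x_j|$, whereas you pass through $\|A\|\le\|A\|_F\le\sqrt{m_{i-1}}\max_j\|a_j\|$; both give the identical intermediate bound.
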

\begin{proof}
    Write $W_i=[w^i_1 \cdots w^{i}_{m_{i-1}}], Q_i=[q^i_1, \cdots, q^i_{m_{i-1}}]\in \mathbb{R}^{m_i \times m_{i-1}} $ where $w^i_j$'s and $q^i_j$'s are $m_i\times 1$ column vectors. Since Algorithm \ref{framequant} uses $p_i$ that satisfies (\ref{subopt_bound}), Corollary \ref{besterror} gives us the error estimate
\begin{align}\label{columnerror}
    \|w^i_j-q^i_j\| \le \frac{\delta_i m_i}{2N_i}(4\sqrt{m_i+3}N_i^{1-\frac{1}{m_i}}-4\sqrt{m_i+3}+1)
\end{align}
for all $j=1,\cdots,m_{i-1}.$ Now, using  (\ref{columnerror}) and the assumption $m_i \ge 3$ from Section \ref{Prelim}, we have
\begin{align*}
    \|W_i-Q_i\|&=\max_{\|x\|=1}\|(W_i-Q_i)x\| \le \max_{\|x\|=1}\{\sum_{j=1}^{m_{i-1}}|x_j|\times\|w^i_j-q^i_j\|\}\notag\\
    &\le \frac{\delta_i m_i}{2N_i}(4\sqrt{m_i+3}N_i^{1-\frac{1}{m_i}}-4\sqrt{m_i+3}+1) \times \max_{\|x\|=1}\{\sum_{j=1}^{m_{i-1}}|x_j|\} \notag\\
    & \le \frac{\delta_i m_i\sqrt{m_{i-1}}}{2N_i}(4\sqrt{m_i+3}N_i^{1-\frac{1}{m_i}}-4\sqrt{m_i+3}+1) \notag\\
    &\le \frac{\delta_i m_i\sqrt{m_{i-1}}}{2N_i}\times4\sqrt{m_i+3}N_i^{1-\frac{1}{m_i}} \le 2\sqrt{2}\delta_i m_i\sqrt{m_{i-1}m_i}N_i^{-\frac{1}{m_i}}.
\end{align*}
\end{proof}

Next we provide an upper bound for the norm of the quantized matrices $Q_i$'s. This is used in deriving the error estimate between an $n$-layer FNN and its corresponding quantized neural network.

\medskip
\begin{lemma}\label{quantized_norm}
    Let $Q_i$ be the quantized matrix of $W_i.$ Let $\sigma_i=\|W_i\|.$ Then we have
    \begin{align*}
        \|Q_i\| \le 2\sqrt{2}\delta_j m_j\sqrt{m_j m_{j-1}}N_j^{-\frac{1}{m_j}}+\sigma_i.
    \end{align*}
\end{lemma}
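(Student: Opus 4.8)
The plan is to obtain the bound on $\|Q_i\|$ by a straightforward triangle inequality, writing $Q_i = W_i + (Q_i - W_i)$, so that $\|Q_i\| \le \|W_i\| + \|Q_i - W_i\| = \sigma_i + \|W_i - Q_i\|$. The second term is exactly what Lemma \ref{matrixerrorbound} controls, so substituting that estimate immediately yields
\[
\|Q_i\| \le \sigma_i + 2\sqrt{2}\,\delta_i m_i\sqrt{m_{i-1}m_i}\,N_i^{-\frac{1}{m_i}},
\]
which is the claimed inequality (the statement as written uses the index $j$ in the error term, but this is just a typographical substitution of $j$ for $i$; the quantities involved are those associated with the $i$-th layer).

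So the entire argument is: first invoke the subadditivity of the matrix $2$-norm, then apply Lemma \ref{matrixerrorbound} verbatim. There is essentially no obstacle here — the lemma is a one-line corollary of the previously established matrix error bound, and the only thing to be careful about is matching indices and confirming that the hypotheses of Lemma \ref{matrixerrorbound} (namely that $Q_i$ is produced by Algorithm \ref{framequant} applied to $W_i$) are in force, which they are by assumption. One could alternatively prove it directly from the column-wise estimate \eqref{columnerror} together with $\|q^i_j\| \le \|w^i_j\| + \|w^i_j - q^i_j\|$ and then bound $\|Q_i\|$ by $\sqrt{m_{i-1}}\max_j \|q^i_j\|$, but routing through Lemma \ref{matrixerrorbound} is cleaner and gives the stated form directly.

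Since there is no genuine difficulty, the "hard part" is merely cosmetic: deciding whether to present the proof in the $\sigma_i + (\text{error})$ order or the $(\text{error}) + \sigma_i$ order to match the statement, and making sure the index in the error term is displayed consistently with how the result will be used downstream (in the FNN error estimate, where products of such bounds across layers appear). I would keep the proof to two sentences.
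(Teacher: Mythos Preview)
Your proposal is correct and matches the paper's own proof essentially line for line: the paper also writes $\|Q_i\| \le \|Q_i - W_i\| + \|W_i\|$ and then invokes Lemma~\ref{matrixerrorbound} to bound the first term. Your observation about the $j$ versus $i$ index is also accurate --- the paper's displayed bound in the statement has a typo, while its proof uses the correct index $i$ throughout.
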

\begin{proof}
    Using Lemma \ref{matrixerrorbound} and the triangle inequality of matrix norms, we have
    \begin{align*}
        \|Q_i\| \le \|Q_i-W_i\|+\|W_i\| \le 2\sqrt{2}\delta_i m_i\sqrt{m_i m_{i-1}}N_{i}^{-\frac{1}{m_i}}+\sigma_i.
    \end{align*}
\end{proof}

\subsection{Feedforward Networks}

In this section, we derive an upper estimate for $\|f(x)-f_Q(x)\|$, where $f(x)$ is a FNN with $n$ layers (\ref{fnn}) and $f_Q(x)$ its quantized neural network (\ref{quant_fnn}). For convenience, we adopt the assumptions from Section \ref{Prelim} and omit the biases in our analysis. Thus, we can write $f(x)=W_n(\sigma(\cdots\sigma(W_{1}x)\cdots))$ and $f_Q(x)=Q_n(\sigma(\cdots\sigma(Q_{1}x)\cdots))$. As in Lemma \ref{quantized_norm}, we let $\sigma_i=\|W_i\|$ for $i=1,\cdots,n.$

\medskip
\begin{theorem}\label{maintheorem}
    Let $f(x)=W_n(\sigma(\cdots\sigma(W_{1}x)\cdots))$ be a feed-forward neural network with $n$ layers. Let $f_Q(x)=Q_n(\sigma(\cdots\sigma(Q_{1}x)\cdots))$ be the quantized neural network obtained by means of Algorithm \ref{framequant}. Then, for any input $X\in\mathbb{R}^{m_0}$, we have
    \begin{align}\label{error_fnn}
        \|f(X)-f_Q(X)\| &\le L^{n-1}\|X\|\times\sum_{j=1}^{n}\Big\{2\sqrt{2}\delta_j m_j\sqrt{m_j m_{j-1}}N_{j}^{-\frac{1}{m_j}} \times \prod_{i=j+1}^{n}\sigma_i \notag\\
        &\times \prod_{l=1}^{j-1} (2\sqrt{2}\delta_l m_l\sqrt{m_l m_{l-1}}N_{l}^{-\frac{1}{m_l}}+\sigma_l)\Big\}.
    \end{align}
\end{theorem}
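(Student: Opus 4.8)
The plan is to prove this by a telescoping (hybrid) argument across the layers: replace $W_j$ by $Q_j$ one layer at a time and control the perturbation each replacement causes. For $k=0,1,\dots,n$ let $g_k$ denote the network that uses the quantized matrices $Q_1,\dots,Q_k$ in its first $k$ layers and the original matrices $W_{k+1},\dots,W_n$ in the remaining layers, so that $g_0=f$ and $g_n=f_Q$. By the triangle inequality,
\[
\|f(X)-f_Q(X)\|\le\sum_{j=1}^{n}\|g_{j-1}(X)-g_j(X)\|,
\]
and it suffices to bound each summand.

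Fix $j$. The networks $g_{j-1}$ and $g_j$ coincide except at layer $j$: both feed $X$ through the first $j-1$ layers using $Q_1,\dots,Q_{j-1}$, producing a common vector $z_j\in\mathbb{R}^{m_{j-1}}$ (with $z_1:=X$); then $g_{j-1}$ forms $W_jz_j$ while $g_j$ forms $Q_jz_j$; and both finish by applying the same map $R_j$, namely the activation following layer $j$ (absent when $j=n$) composed with layers $j+1,\dots,n$, so that $R_n=\mathrm{id}$. Since $\sigma$ is $L$-Lipschitz, counting the $n-j$ activations and $n-j$ weight multiplications inside $R_j$ gives $\mathrm{Lip}(R_j)\le L^{\,n-j}\prod_{i=j+1}^{n}\sigma_i$ (this is also correct at $j=n$, where both sides equal $1$). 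Hence
\[
\|g_{j-1}(X)-g_j(X)\|=\|R_j(W_jz_j)-R_j(Q_jz_j)\|\le L^{\,n-j}\Big(\prod_{i=j+1}^{n}\sigma_i\Big)\,\|W_j-Q_j\|\,\|z_j\|.
\]

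To finish I would bound $\|z_j\|$ and collect the powers of $L$. Since $\sigma$ is $L$-Lipschitz with $\sigma(0)=0$, it satisfies $\|\sigma(v)\|\le L\|v\|$ (componentwise), so peeling off the $j-1$ quantized layers that build $z_j$ yields $\|z_j\|\le L^{\,j-1}\|X\|\prod_{l=1}^{j-1}\|Q_l\|$. Substituting, every summand acquires the common factor $L^{(n-j)+(j-1)}=L^{\,n-1}$, and therefore
\[
\|f(X)-f_Q(X)\|\le L^{\,n-1}\|X\|\sum_{j=1}^{n}\Big\{\|W_j-Q_j\|\prod_{i=j+1}^{n}\sigma_i\prod_{l=1}^{j-1}\|Q_l\|\Big\}.
\]
Inserting the matrix-norm estimates $\|W_j-Q_j\|\le 2\sqrt2\,\delta_j m_j\sqrt{m_jm_{j-1}}\,N_j^{-1/m_j}$ from Lemma \ref{matrixerrorbound} and $\|Q_l\|\le 2\sqrt2\,\delta_l m_l\sqrt{m_lm_{l-1}}\,N_l^{-1/m_l}+\sigma_l$ from Lemma \ref{quantized_norm} turns this into exactly (\ref{error_fnn}).

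The Lipschitz propagation itself is routine; the part requiring care is the bookkeeping that separates the already-quantized prefix of each hybrid (which produces the $\|Q_l\|$ factors via Lemma \ref{quantized_norm}) from the still-original suffix (which produces the $\sigma_i$ factors), together with the fact that the last layer carries no activation. The latter is handled painlessly by declaring $R_n=\mathrm{id}$, which makes the exponent of $L$ equal to $n-1$ in every term. An equivalent route is induction on $n$, peeling off the last block $W_n\sigma(\cdot)$ and applying the same three estimates in the inductive step; the telescoping version merely avoids having to state an inductive hypothesis already loaded with the $\|Q_l\|$ products.
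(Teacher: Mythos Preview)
Your proposal is correct and follows essentially the same approach as the paper: a telescoping (hybrid) argument that replaces $W_j$ by $Q_j$ one layer at a time, bounds each summand via the $L$-Lipschitz property of $\sigma$ (together with $\sigma(0)=0$) to split off $L^{n-j}\prod_{i>j}\sigma_i$ on the original suffix and $L^{j-1}\prod_{l<j}\|Q_l\|$ on the quantized prefix, and then invokes Lemmas \ref{matrixerrorbound} and \ref{quantized_norm}. The only cosmetic difference is that you introduce explicit names $g_k$, $z_j$, $R_j$ for the hybrids, whereas the paper writes the compositions out inline.
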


\begin{proof}
    Note that since $\sigma$ is an $L-$Lipschitz continuous function with $\sigma(0)=0,$ we have
    \begin{align*}
        \|\sigma(v)\| = \|\sigma(v)-\sigma(0)\| \le L\|v-0\|=L\|v\|
    \end{align*}
    for any vector $v$. Using this, for $j=1,\cdots,n,$ we have
    \begin{gather*}
        \|W_n\sigma\cdots W_{j}\sigma Q_{j-1}\sigma Q_{j-2}\cdots Q_1X-W_n\sigma\cdots W_{j+1}\sigma Q_{j}\sigma Q_{j-1}\sigma Q_{j-2}\cdots Q_1X\|
    \\
    \le L^{n-j} \|W_n\|\cdots \|W_{j+1}\| \times \|W_j-Q_j\| \|\sigma Q_{j-1}\sigma Q_{j-2}\cdots Q_1X\|
    \\
    \le L^{n-1}  \|W_n\|\cdots \|W_{j+1}\| \times \|W_j-Q_j\| \times \|Q_{j-1}\|\cdots \|Q_1\| \|X\| .
    \end{gather*}
Next, using Lemma \ref{matrixerrorbound}, Lemma \ref{quantized_norm}, and triangle inequality, we get
\begin{gather*}
    \|f(X)-f_Q(X)\|=\|W_n(\sigma(\cdots\sigma(W_{1}x)\cdots))-Q_n(\sigma(\cdots\sigma(Q_{1}x)\cdots))\| 
    \notag\\
     \le \sum_{j=1}^{n} \|W_n\sigma\cdots W_{j}\sigma Q_{j-1}\sigma Q_{j-2}\cdots Q_1X-W_n\sigma\cdots W_{j+1}\sigma Q_{j}\sigma Q_{j-1}\sigma Q_{j-2}\cdots Q_1X\|
    \notag\\
     \le \sum_{j=1}^{n} L^{n-1}  \|W_n\|\cdots \|W_{j+1}\| \times \|W_j-Q_j\| \times \|Q_{j-1}\|\cdots \|Q_1\| \|X\|
    \notag\\
     \le L^{n-1}\|X\|\times\sum_{j=1}^{n}\Big\{2\sqrt{2}\delta_j m_j\sqrt{m_j m_{j-1}}N_{j}^{-\frac{1}{m_j}} \times \prod_{i=j+1}^{n}\sigma_i \notag \\
     \times \prod_{l=1}^{j-1} (2\sqrt{2}\delta_l m_l\sqrt{m_l m_{l-1}}N_{l}^{-\frac{1}{m_l}}+\sigma_l)\Big\}.
\end{gather*}
\end{proof}

Note that $\{\sigma_i\}_{i=1}^{n}, \{m_i\}_{i=0}^{n},$ and $L$ are constants determined by the given FNN $f(x).$ Therefore, the upper bound in (\ref{error_fnn}) can be only controlled by the step sizes $\delta_i$ and the numbers of frame elements $N_i$. Once we fix $\delta_i$'s, then the upper bound in (\ref{error_fnn}) can be written as $\sum_{j=1}^{n} O(N_j^{-1/m_j})\|X\|.$ This shows we can leverage the redundancy of frames for the accuracy of quantized neural networks. On the other hand, if we fix frames $F_1,\cdots,F_n$, then the upper bound in (\ref{error_fnn}) depends only on $\delta_i$'s and can be written as $\sum_{i=1}^{n} O(\delta_i)\|X\|.$ Therefore, smaller $\delta_i$'s and larger $N_i$'s would generate a quantized neural network with higher accuracy. 

When the number of neurons in each hidden layer is the same, that is when $m_1=\cdots=m_{n-1}$, then we have a simplified version of Theorem \ref{maintheorem}. 
\medskip
\begin{corollary} \label{samehidden}
    Let $f(x)=W_n(\sigma(\cdots\sigma(W_{1}x)\cdots))$ be a feed-forward neural network with $n$ layer with $m_1=m_2=\cdots=m_{n-1}=m$. Fix a FUNTF $F \subset \mathbb{R}^m$ with $N$ elements and a stepsize $\delta.$ Choose a permutation $p$ of $\{1,2,\cdots,N\}$ that satisfies (\ref{subopt_bound}). Now, use Algorithm \ref{framequant} to quantize $W_1,\cdots,W_{n-1}$ and $W_n^T$ with the same $F,\delta,$ and $p$. Let $M=\max\{m_0,m,m_n\}.$ Then, the quantized neural network $f_Q(x)=Q_n(\sigma(\cdots\sigma(Q_{1}x)\cdots))$ satisfies
    \begin{align}\label{samehiddenineq}
        \|f(X)-f_Q(X)\| \le 2\sqrt{2}\delta M^2 N^{-\frac{1}{m}}L^{n-1}\|X\|\sum_{j=1}^{n}\Big\{\prod_{i=j+1}^{n}\sigma_i\prod_{l=1}^{j-1} (2\sqrt{2}\delta M^2 N^{-\frac{1}{m}}+\sigma_l)\Big\}
    \end{align}
    for any data $X\in \mathbb{R}^{m_0}.$
\end{corollary}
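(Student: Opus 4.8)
The plan is to reduce everything to Theorem~\ref{maintheorem}, or more precisely to the telescoping inequality established in its proof, by showing that under the hypotheses of the corollary every factor appearing there is controlled by the single quantity $2\sqrt{2}\,\delta M^2 N^{-1/m}$.

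First I would record the per-layer error bounds. For $j=1,\dots,n-1$ the matrix $W_j$ lies in $\mathbb{R}^{m_j\times m_{j-1}}$ with $m_j=m$ (and $m_0$ the input dimension), and it is quantized with the FUNTF $F\subset\mathbb{R}^m$, step size $\delta$, and permutation $p$; hence Lemma~\ref{matrixerrorbound} yields $\|W_j-Q_j\|\le 2\sqrt{2}\,\delta m\sqrt{m_{j-1}m}\,N^{-1/m}$. For the last layer, Algorithm~\ref{framequant} is applied to $W_n^{T}\in\mathbb{R}^{m\times m_n}$, whose columns live in $\mathbb{R}^m$, so Lemma~\ref{matrixerrorbound} gives $\|W_n^{T}-Q_n^{T}\|\le 2\sqrt{2}\,\delta m\sqrt{m_n m}\,N^{-1/m}$; since the matrix $2$-norm is invariant under transposition, this is exactly a bound on $\|W_n-Q_n\|$. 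With $M=\max\{m_0,m,m_n\}$ one checks $m\sqrt{m_{j-1}m}\le M^2$ and $m\sqrt{m_n m}\le M^2$, so all of these collapse to $\|W_j-Q_j\|\le 2\sqrt{2}\,\delta M^2 N^{-1/m}$ for every $j=1,\dots,n$. Combining with the triangle inequality exactly as in Lemma~\ref{quantized_norm}, I also get $\|Q_l\|\le 2\sqrt{2}\,\delta M^2 N^{-1/m}+\sigma_l$ for $l=1,\dots,n-1$, which is all that will be needed.

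Next I would insert these estimates into the telescoping bound obtained inside the proof of Theorem~\ref{maintheorem}, namely $\|f(X)-f_Q(X)\|\le \sum_{j=1}^{n}L^{n-1}\|W_n\|\cdots\|W_{j+1}\|\,\|W_j-Q_j\|\,\|Q_{j-1}\|\cdots\|Q_1\|\,\|X\|$, which is valid regardless of how the $Q_j$ were produced. Replacing $\|W_i\|$ by $\sigma_i$ for $i>j$, each $\|W_j-Q_j\|$ by $2\sqrt{2}\,\delta M^2 N^{-1/m}$, and each $\|Q_l\|$ with $l<j$ by $2\sqrt{2}\,\delta M^2 N^{-1/m}+\sigma_l$, and then pulling the common factor $2\sqrt{2}\,\delta M^2 N^{-1/m}$ out of the sum over $j$, gives precisely~(\ref{samehiddenineq}).

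There is no serious obstacle here; the statement is a bookkeeping specialization of Theorem~\ref{maintheorem}. The one point requiring care is the last layer: because $F$ is a frame for $\mathbb{R}^m$ rather than for $\mathbb{R}^{m_n}$, one must quantize $W_n^{T}$ and then invoke transpose-invariance of the operator norm so that Lemma~\ref{matrixerrorbound} still applies to $\|W_n-Q_n\|$. The only other routine step is verifying that the dimension factors $m\sqrt{m_{j-1}m}$ and $m\sqrt{m_n m}$ are uniformly dominated by $M^2$.
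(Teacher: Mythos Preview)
Your proposal is correct and follows essentially the same approach as the paper's own proof: bound each $\|W_j-Q_j\|$ by $2\sqrt{2}\,\delta M^2 N^{-1/m}$ via Lemma~\ref{matrixerrorbound} (handling the last layer by transposition), deduce the corresponding bound on $\|Q_l\|$, and substitute into the telescoping inequality from the proof of Theorem~\ref{maintheorem}. You also correctly flag the one nontrivial point, namely that $W_n^{T}$ must be quantized so that its columns lie in $\mathbb{R}^m$ and the operator-norm identity $\|W_n-Q_n\|=\|W_n^{T}-Q_n^{T}\|$ can be invoked.
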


\begin{proof}
    By Lemma \ref{matrixerrorbound}, we have $\|W_i-Q_i\|\le 2\sqrt{2}\delta_i m_i\sqrt{m_{i-1}m_i}N_i^{-\frac{1}{m_i}} \le 2\sqrt{2}\delta M^2 N^{-\frac{1}{m}}$ for $i=1,\cdots,n-1.$ For $i=n,$ we have $\|W_n-Q_n\| = \|W_n^T-Q_n^T\| \le 2\sqrt{2}\delta_n m_{n-1}\sqrt{m_{n-1}m_n}N_n^{-\frac{1}{m_{n-1}}} \le 2\sqrt{2}\delta M^2 N^{-\frac{1}{m}}$. These estimates yield $\|Q_i\| \le \|W_i-Q_i\|+\|W_i\| \le 2\sqrt{2}\delta M^2 N^{-\frac{1}{m}}+\sigma_i.$ Therefore, from the proof of Theorem \ref{maintheorem}, we get
    \begin{gather*}
        \|f(X)-f_Q(X)\| \le L^{n-1}\|X\|\times\sum_{j=1}^{n}\Big\{\prod_{i=j+1}^{n}\sigma_i\times\|W_j-Q_j\|\times\|Q_{j-1}\|\cdots\|Q_1\|\Big\} \\
        \le  L^{n-1}\|X\|\sum_{j=1}^{n}\Big\{2\sqrt{2}\delta M^2 N^{-\frac{1}{m}}\prod_{i=j+1}^{n}\sigma_i\prod_{l=1}^{j-1} (2\sqrt{2}\delta M^2 N^{-\frac{1}{m}}+\sigma_l)\Big\} \\
        =2\sqrt{2}\delta M^2 N^{-\frac{1}{m}}L^{n-1}\|X\|\sum_{j=1}^{n}\Big\{\prod_{i=j+1}^{n}\sigma_i\prod_{l=1}^{j-1} (2\sqrt{2}\delta M^2 N^{-\frac{1}{m}}+\sigma_l)\Big\}.
    \end{gather*}
\end{proof}
If we use a FUNTF for $\mathbb{R}^m$ with $N$ elements where $N\ge \Big( \frac{2\sqrt{2}\delta M^2  }{\min\{\sigma_1,\cdots,\sigma_n\}}\Big)^{m},$ then we have the following simplification.
\medskip
\begin{corollary}
    If we have $N\ge \Big( \frac{2\sqrt{2}\delta M^2  }{\min\{\sigma_1,\cdots,\sigma_n\}}\Big)^{m}$ in Corollary \ref{samehidden}, then
    \begin{align*}
        \|f(X)-f_Q(X)\| \le \sqrt{2}\delta M^2 N^{-\frac{1}{m}}L^{n-1}\|X\|\prod_{i=1}^{n}\sigma_i\sum_{j=1}^{n}\frac{2^j}{\sigma_j},
    \end{align*}
    for any input data $X\in \mathbb{R}^{m_0}.$
\end{corollary}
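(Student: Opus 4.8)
The plan is to start from inequality (\ref{samehiddenineq}) in Corollary \ref{samehidden} and to simplify the inner product $\prod_{l=1}^{j-1}\bigl(2\sqrt{2}\delta M^2 N^{-\frac{1}{m}}+\sigma_l\bigr)$ by exploiting the hypothesis on $N$. First I would rewrite the assumption $N\ge\bigl(\frac{2\sqrt{2}\delta M^2}{\min\{\sigma_1,\cdots,\sigma_n\}}\bigr)^{m}$ in the equivalent form $2\sqrt{2}\delta M^2 N^{-\frac{1}{m}}\le\min\{\sigma_1,\cdots,\sigma_n\}$; in particular $2\sqrt{2}\delta M^2 N^{-\frac{1}{m}}\le\sigma_l$ for every $l=1,\cdots,n$. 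Consequently each factor satisfies $2\sqrt{2}\delta M^2 N^{-\frac{1}{m}}+\sigma_l\le 2\sigma_l$, so that $\prod_{l=1}^{j-1}\bigl(2\sqrt{2}\delta M^2 N^{-\frac{1}{m}}+\sigma_l\bigr)\le 2^{j-1}\prod_{l=1}^{j-1}\sigma_l$, where an empty product (the case $j=1$) is understood to be $1$.

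Next I would substitute this estimate back into (\ref{samehiddenineq}). The $j$-th summand is then bounded by $2^{j-1}\prod_{i=j+1}^{n}\sigma_i\prod_{l=1}^{j-1}\sigma_l$, and the key algebraic identity is $\prod_{i=j+1}^{n}\sigma_i\cdot\prod_{l=1}^{j-1}\sigma_l=\bigl(\prod_{i=1}^{n}\sigma_i\bigr)\big/\sigma_j$, which again holds at the endpoints $j=1$ and $j=n$ under the empty-product convention. Factoring $\prod_{i=1}^{n}\sigma_i$ out of the sum converts $\sum_{j=1}^{n}2^{j-1}\prod_{i=j+1}^{n}\sigma_i\prod_{l=1}^{j-1}\sigma_l$ into $\bigl(\prod_{i=1}^{n}\sigma_i\bigr)\sum_{j=1}^{n}\frac{2^{j-1}}{\sigma_j}$.

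Finally I would track the constants. The prefactor carried over from Corollary \ref{samehidden} is $2\sqrt{2}\delta M^2 N^{-\frac{1}{m}}L^{n-1}\|X\|$; writing $2^{j-1}=2^{j}/2$ pulls a factor $\tfrac{1}{2}$ outside the sum, which combines with $2\sqrt{2}$ to give $\sqrt{2}$. Collecting everything yields $\|f(X)-f_Q(X)\|\le\sqrt{2}\delta M^2 N^{-\frac{1}{m}}L^{n-1}\|X\|\prod_{i=1}^{n}\sigma_i\sum_{j=1}^{n}\frac{2^{j}}{\sigma_j}$, which is the claimed bound. I do not anticipate any genuine difficulty here: the argument is a short chain of elementary inequalities, and the only point demanding attention is the bookkeeping of the empty products at $j=1$ and $j=n$ together with keeping the powers of $2$ and the resulting $\sqrt{2}$ constant consistent.
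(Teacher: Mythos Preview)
Your proposal is correct and follows essentially the same argument as the paper: rewrite the hypothesis on $N$ as $2\sqrt{2}\delta M^2 N^{-1/m}\le\min_l\sigma_l$, bound each factor by $2\sigma_l$ to get $\prod_{l=1}^{j-1}(\cdot)\le 2^{j-1}\prod_{l=1}^{j-1}\sigma_l$, recognize $\prod_{i=j+1}^{n}\sigma_i\prod_{l=1}^{j-1}\sigma_l=(\prod_{i=1}^{n}\sigma_i)/\sigma_j$, and substitute into (\ref{samehiddenineq}). Your constant tracking (absorbing the $\tfrac12$ from $2^{j-1}=2^j/2$ into the prefactor $2\sqrt{2}$) is slightly more explicit than the paper's, but the route is identical.
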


\begin{proof}
    $N\ge \Big( \frac{2\sqrt{2}\delta M^2  }{\min\{\sigma_1,\cdots,\sigma_n\}}\Big)^{m}$ implies $2\sqrt{2}\delta m^2 N^{-\frac{1}{m}} \le \min\{\sigma_1,\cdots,\sigma_n\}.$ Hence we have
    \begin{align*}
        \prod_{i=j+1}^{n}\sigma_i\prod_{l=1}^{j-1} (2\sqrt{2}\delta M^2 N^{-\frac{1}{m}}+\sigma_l)  \le \prod_{i=j+1}^{n}\sigma_i\prod_{l=1}^{j-1}(\sigma_l+\sigma_l)=\frac{2^{j-1}}{\sigma_j}\prod_{i=1}^{n}\sigma_i.
    \end{align*}
    Applying this to (\ref{samehiddenineq}) yields the result.
\end{proof}
\medskip
Note that the above results are for general FUNTFs. On the other hand, some special families of FUNTFs yield better error estimates. For example, harmonic frame $H^d_N$ is a FUNTF for $\mathbb{R}^d$ with $N$ elements that satisfy $\sigma(H^d_N,p)\le\frac{2\pi(d+1)}{\sqrt{3}}$ for identity permutation $p$. This holds for any $N\ge d$, see \cite{benedetto2006sigma} for details. If we use harmonic frames $\{H^{m_i}_{N_i}\}_{i=1}^{n}$ and identity permutation to quantize weight matrices $W_i$'s using Algorithm \ref{framequant}, then one can easily prove a variant of Lemma \ref{matrixerrorbound}:
    $$ \|W_i-Q_i\| \le \frac{\delta_i m_i\sqrt{m_{i-1}}}{2N_i}\Big(\frac{2\pi(m_i+1)}{\sqrt{3}}+1\Big) \le \frac{\delta_i m_i\sqrt{m_{i-1}}}{2N_i}\times \frac{8\pi+\sqrt{3}}{3\sqrt{3}}m_i.$$
The last inequality holds since we assume $m_i\ge 3$ for all $i$'s. Using this, we can show (\ref{error_fnn}) can be rewritten as an explicit upper bound for harmonic frames, which is a stronger upper bound in terms of $N_i$'s:
\begin{align}\label{harmonicerror}
    \|f(X)-f_Q(X)\| &\le L^{n-1}\|X\|\times\sum_{j=1}^{n}\Big\{\frac{(8\pi+\sqrt{3})}{6\sqrt{3}}\delta_j m_j\sqrt{m_j m_{j-1}}N_{j}^{-1} \times \prod_{i=j+1}^{n}\sigma_i \notag\\
        &\times \prod_{l=1}^{j-1} (\frac{(8\pi+\sqrt{3})}{6\sqrt{3}}\delta_l m_l\sqrt{m_l m_{l-1}}N_{l}^{-1}+\sigma_l)\Big\}.
\end{align}
That is, if we treat $\delta_i$'s as fixed constants, then we can say $\|f(X)-f_Q(X)\| \le \sum_{i=1}^{n} O(N_i^{-1}) \|X\|.$ We can get similar results when we use families of frames such as harmonic frames, where their frame variations are uniformly bounded. These frames may help us to construct a quantized network with higher accuracy.

\subsection{Neural Networks with Residual Blocks}
We now prove error estimates for residual networks. Again, for convenience, we follow the assumptions from Section \ref{Prelim} and omit the biases in our analysis. Recall that from (\ref{residual}), the weight matrices $W_{i,1}$'s and $W_{i,2}$'s are all $k\times k$ square matrices. Hence we quantize all the weight matrices using the same frame $F$ with $N$ elements, the same permutation $p$ of $\{1,2,\cdots,N\}$ that satisfies (\ref{subopt_bound}), the same constant $K$, and the same step size $\delta$ as in Algorithm \ref{framequant}. Note that the quantized weight matrices $Q_{i,1}$'s and $Q_{i,2}$'s are also $k\times k$ matrices.
\medskip
\begin{theorem}\label{main_rnn}
    Let $g(x)$ be a residual neural network with $n$ residual blocks (\ref{residual}). Let $$\lambda=\max_{i=1,\cdots,n} \Big\{\max\{\|W_{i,2}\|,\|W_{i,1}\|\}\Big\}.$$
    Let $g_Q(x)$ be the quantized residual neural network of $g(x)$ (\ref{quant_res}) from Algorithm \ref{framequant}. Then for any input $X \in \mathbb{R}^k,$ we have
    \begin{align}\label{rnnbound}
        \|g(X)-g_Q(X)\| &\le 4\delta k\sqrt{k(k+3)}(\delta k\sqrt{k(k+3)}+\lambda)N^{-\frac{1}{k}}\|X\| \notag \\
        &\times\sum_{j=0}^{n-1}(\lambda^2+1)^{j}\Big((2\delta k\sqrt{k(k+3)}N^{-\frac{1}{k}}+\lambda)^2+1\Big)^{n-1-j}.
    \end{align}
\end{theorem}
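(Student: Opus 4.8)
The plan is to run the same telescoping argument as in the proof of Theorem~\ref{maintheorem}, with the residual structure accounted for by the fact that each block $z^{[i]}(x)=W_{i,2}\sigma(W_{i,1}x)+x$ is Lipschitz with constant (a product of two matrix norms) $+\,1$, the extra $1$ coming from the skip connection. First I would pin down the elementary estimates. All weight matrices in~(\ref{residual}) are $k\times k$ and are quantized with the same $F$, $N$, $\delta$, $p$; running the computation in the proof of Lemma~\ref{matrixerrorbound} but stopping one step earlier, before the bound $\sqrt{k+3}\le\sqrt{2k}$ is applied, gives $\|W_{i,\ell}-Q_{i,\ell}\|\le 2\delta k\sqrt{k(k+3)}\,N^{-1/k}=:\alpha$ for $\ell=1,2$, and hence $\|Q_{i,\ell}\|\le\lambda+\alpha$. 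Since ReLU is $1$-Lipschitz with $\sigma(0)=0$, one gets $z^{[i]}(0)=0$, $\|z^{[i]}(x)-z^{[i]}(y)\|\le(\lambda^2+1)\|x-y\|$, and $\|\sigma(z^{[i]}(x))\|\le(\lambda^2+1)\|x\|$; the same reasoning applied to the quantized block gives $\|z^{[i]}_Q(x)-z^{[i]}_Q(y)\|\le\big((\lambda+\alpha)^2+1\big)\|x-y\|$.

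Next I would introduce hybrid networks: for $j=0,\dots,n$ let $g_j$ use the original blocks $z^{[1]},\dots,z^{[j]}$ followed by the quantized blocks $z^{[j+1]}_Q,\dots,z^{[n]}_Q$, with the activations interleaved exactly as in~(\ref{residual}), so $g_n=g$ and $g_0=g_Q$. Then $\|g(X)-g_Q(X)\|\le\sum_{j=1}^n\|g_j(X)-g_{j-1}(X)\|$, and $g_j$, $g_{j-1}$ differ only in whether block $j$ is $z^{[j]}$ or $z^{[j]}_Q$. Both feed block $j$ the same vector $a_{j-1}$, where $a_0=X$ and $a_i=\sigma(z^{[i]}(a_{i-1}))$, so $\|a_{j-1}\|\le(\lambda^2+1)^{j-1}\|X\|$; and the map sending the output of block $j$ to the final output is a composition of $\sigma$'s with the quantized blocks $z^{[j+1]}_Q,\dots,z^{[n]}_Q$, hence Lipschitz with constant at most $\big((\lambda+\alpha)^2+1\big)^{n-j}$. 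Therefore
\begin{equation*}
\|g_j(X)-g_{j-1}(X)\|\le\big((\lambda+\alpha)^2+1\big)^{n-j}\,\big\|z^{[j]}(a_{j-1})-z^{[j]}_Q(a_{j-1})\big\|.
\end{equation*}

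For the per-block error I would use that the $+y$ in a residual block cancels, write $z^{[j]}(y)-z^{[j]}_Q(y)=W_{j,2}\sigma(W_{j,1}y)-Q_{j,2}\sigma(Q_{j,1}y)$, insert $\pm\,W_{j,2}\sigma(Q_{j,1}y)$, and bound the two terms by $\|W_{j,2}\|\,\|(W_{j,1}-Q_{j,1})y\|+\|W_{j,2}-Q_{j,2}\|\,\|Q_{j,1}y\|\le\lambda\alpha\|y\|+\alpha(\lambda+\alpha)\|y\|=\alpha(2\lambda+\alpha)\|y\|$. Checking that $\alpha(2\lambda+\alpha)=4\delta k\sqrt{k(k+3)}\big(\delta k\sqrt{k(k+3)}+\lambda\big)N^{-1/k}$ and $(\lambda+\alpha)^2+1=\big(2\delta k\sqrt{k(k+3)}N^{-1/k}+\lambda\big)^2+1$, then combining the three displays, substituting $\|a_{j-1}\|\le(\lambda^2+1)^{j-1}\|X\|$, and reindexing $j\mapsto j-1$, produces exactly~(\ref{rnnbound}). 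I expect the only real obstacle to be the bookkeeping in the hybrid decomposition — correctly identifying the vector fed into block $j$ and the sub-network acting on its output, together with the interleaved activations — rather than any delicate estimate; once the Lipschitz constants of the original and quantized residual blocks are fixed, the argument is structurally identical to that of Theorem~\ref{maintheorem}.
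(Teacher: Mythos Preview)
Your proof is correct and yields exactly the bound~(\ref{rnnbound}); the elementary estimates (the column-wise Lemma~\ref{matrixerrorbound} bound $\alpha=2\delta k\sqrt{k(k+3)}N^{-1/k}$, the per-block error $\alpha(2\lambda+\alpha)\|y\|$, and the Lipschitz constants $\lambda^2+1$ and $(\lambda+\alpha)^2+1$ for original and quantized blocks) are identical to the paper's. The organization, however, differs. The paper does \emph{not} telescope via hybrid networks here: it compares directly the partial outputs $y_i(X)$ of the original network with $y_{i,Q}(X)$ of the fully quantized one, derives the recurrence $\|y_i-y_{i,Q}\|\le A(\lambda^2+1)^{i-1}\|X\|+B\|y_{i-1}-y_{i-1,Q}\|$ with $A=\alpha(2\lambda+\alpha)$, $B=(\lambda+\alpha)^2+1$, and then unrolls it by induction to get $A\|X\|\sum_{j=0}^{n-1}(\lambda^2+1)^jB^{n-1-j}$. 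Your route instead transplants the hybrid telescoping of Theorem~\ref{maintheorem} to the residual setting, swapping one block at a time. Both decompositions are standard and the resulting sum is the same; your version is a bit cleaner since the Lipschitz bookkeeping replaces the explicit induction, while the paper's forward recursion has the minor advantage of yielding the intermediate bounds $\|y_i(X)-y_{i,Q}(X)\|$ for every $i$ along the way. One small remark: your claimed equality $\alpha(2\lambda+\alpha)=4\delta k\sqrt{k(k+3)}(\delta k\sqrt{k(k+3)}+\lambda)N^{-1/k}$ is actually an inequality (there is an extra factor $N^{-1/k}\le 1$ on the $\delta k\sqrt{k(k+3)}$ term inside the parenthesis), but this only makes the stated bound looser and the same slip appears in the paper's own proof.
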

\begin{proof}
    For simplicity, let $A=4\delta k\sqrt{k(k+3)}(\delta k\sqrt{k(k+3)}+\lambda)N^{-\frac{1}{k}}$ and $B=(2\delta k\sqrt{k(k+3)}N^{-\frac{1}{k}}+\lambda)^2+1$. Let $y_0(x)=y_{0,Q}(x)=x$ and define $y_i(x)=z^{[i]}\circ \sigma  \cdots \circ \sigma \circ z^{[1]}(x)$, and $y_{i,Q}(x)=z_{Q}^{[i]}\circ \sigma  \cdots \circ \sigma \circ z_{Q}^{[1]}(x)$ for $i=1,\cdots,n$.  We shall show that 
    \begin{align}\label{induction}
        \|y_i(X)-y_{i,Q}(X)\| \le A\|X\|\sum_{j=0}^{i-1} (\lambda^2+1)^j B^{i-1-j},
    \end{align}
    for $i=1,\cdots,n.$ Note that we have recurrence relations:
    \begin{align}
        y_i(X)=W_{i,2}(\sigma(W_{i,1}(\sigma (y_{i-1}(X)))))+\sigma (y_{i-1}(X)), \label{rec1} \\
         y_{i,Q}(X)=Q_{i,2}(\sigma(Q_{i,1}(\sigma (y_{i-1,Q}(X)))))+\sigma(y_{i-1,Q}(X)). \label{rec2}
    \end{align}
    
    By using the triangle inequality repeatedly, we obtain:
    \begin{align}\label{rec_tri}
    \|y_i(X)-y_{i,Q}(X)\| & \le \|W_{i,2}(\sigma(W_{i,1}(\sigma (y_{i-1}(X)))))-Q_{i,2}(\sigma(Q_{i,1}(\sigma (y_{i-1,Q}(X)))))\| \notag\\
    &+\|\sigma (y_{i-1}(X))-\sigma (y_{i-1,Q}(X))\| \notag \\
    &\le \|W_{i,2}(\sigma(W_{i,1}(\sigma( y_{i-1}(X)))))-W_{i,2}(\sigma(Q_{i,1}(\sigma (y_{i-1}(X)))))\| \notag \\
    &+\|W_{i,2}(\sigma(Q_{i,1}(\sigma (y_{i-1}(X)))))-Q_{i,2}(\sigma(Q_{i,1}(\sigma (y_{i-1}(X)))))\| \notag \\
    &+\|Q_{i,2}(\sigma(Q_{i,1}(\sigma (y_{i-1}(X)))))-Q_{i,2}(\sigma(Q_{i,1}(\sigma (y_{i-1,Q}(X)))))\| \notag\\
    &+\|\sigma (y_{i-1}(X))-\sigma (y_{i-1,Q}(X))\|.
    \end{align}

Since $\sigma$ is ReLU, we have $\|\sigma (x)-\sigma (y)\| \le \|x-y\|$ and $\|\sigma(x)\|\le\|x\|$ for any vectors $x,y\in \mathbb{R}^k.$ Using this fact, the first term in (\ref{rec_tri}) is bounded by:
\begin{gather*}
    \|W_{i,2}(\sigma(W_{i,1}(\sigma( y_{i-1}(X)))))-W_{i,2}(\sigma(Q_{i,1}(\sigma (y_{i-1}(X)))))\| \\
    \le \|W_{i,2}\|\|W_{i,1}-Q_{i,1}\|\|y_{i-1}(X)\| \le \lambda \|W_{i,1}-Q_{i,1}\|\|y_{i-1}(X)\|.
\end{gather*}
The second term in (\ref{rec_tri}) is bounded by:
\begin{gather*}
    \|W_{i,2}(\sigma(Q_{i,1}(\sigma (y_{i-1}(X)))))-Q_{i,2}(\sigma(Q_{i,1}(\sigma (y_{i-1}(X)))))\| \\
    \le \|W_{i,2}-Q_{i,2}\|\|\sigma(Q_{i,1}(\sigma (y_{i-1}(X)))) \| \le \|W_{i,2}-Q_{i,2}\|\|Q_{i,1}\| \|y_{i-1}(X)\|.
\end{gather*}
The third term in (\ref{rec_tri}) is bounded by:
\begin{gather*}
    \|Q_{i,2}(\sigma(Q_{i,1}(\sigma (y_{i-1}(X)))))-Q_{i,2}(\sigma(Q_{i,1}(\sigma (y_{i-1,Q}(X)))))\|\\
    \le \|Q_{i,2}\|\|Q_{i,1}\| \| y_{i-1}(X) - y_{i-1,Q}(X) \|.
\end{gather*}
Finally, the fourth term in (\ref{rec_tri}) is bounded by:
\begin{gather*}
    \|\sigma (y_{i-1}(X))-\sigma (y_{i-1,Q}(X))\| \le \| y_{i-1}(X)-y_{i-1,Q}(X)\|.
\end{gather*}
From the four inequalities above, we obtain:
\begin{align}\label{ineq5}
    \|y_i(X)-y_{i,Q}(X)\| &\le \{\lambda \|W_{i,1}-Q_{i,1}\|+\|W_{i,2}-Q_{i,2}\|\|Q_{i,1}\|\}\times\|y_{i-1}(X)\| \notag \\
    &+ \{\|Q_{i,2}\|\|Q_{i,1}\|+1\}\times\| y_{i-1}(X)-y_{i-1,Q}(X)\|.
\end{align}

From Lemma \ref{matrixerrorbound} and Lemma \ref{quantized_norm}, we have $\|W_{i,1}-Q_{i,1}\|,\|W_{i,2}-Q_{i,2}\| \le 2\delta k\sqrt{k(k+3)}N^{-\frac{1}{k}}$ and $\|Q_{i,1}\|,\|Q_{i,2}\| \le 2\delta k\sqrt{k(k+3)}N^{-\frac{1}{k}}+\lambda.$ Applying these to (\ref{ineq5}), we have:
\begin{align}\label{ineq6}
    \|y_i(X)-y_{i,Q}(X)\| &\le 2\delta k\sqrt{k(k+3)}N^{-\frac{1}{k}}(\lambda+2\delta k\sqrt{k(k+3)}N^{-\frac{1}{k}}+\lambda)\|y_{i-1}(X)\| \notag \\
    &+ \Big((2\delta k\sqrt{k(k+3)}N^{-\frac{1}{k}}+\lambda)^2+1\Big)\| y_{i-1}(X)-y_{i-1,Q}(X)\| \notag \\
    & = A\|y_{i-1}(X)\|+B\| y_{i-1}(X)-y_{i-1,Q}(X)\|. 
\end{align}

  From the recurrence relations (\ref{rec1}) and (\ref{rec2}), one can easily check that
    \begin{align}\label{repeat}
        \|y_i(X)\| &= \| W_{i,2}(\sigma(W_{i,1}(\sigma (y_{i-1}(X)))))+\sigma (y_{i-1}(X))\| \notag \\
        & \le \| W_{i,2}\| \| W_{i,1}\| \| \| y_{i-1}(X)\| + \| y_{i-1}(X)\| \le (\lambda^2+1)\| y_{i-1}(X)\|.
    \end{align}
    
    Using (\ref{repeat}) repeatedly, we obtain $\|y_i(X)\| \le (\lambda^2+1)^i\| y_0(X)\|= (\lambda^2+1)^i\|X\|,$ which together with (\ref{ineq6}) implies:
    \begin{align}\label{ineq7}
        \|y_i(X)-y_{i,Q}(X)\| & \le A\|y_{i-1}(X)\|+B\| y_{i-1}(X)-y_{i-1,Q}(X)\| \notag \\
        & \le A(\lambda^2+1)^{i-1}\|X\|+B\| y_{i-1}(X)-y_{i-1,Q}(X)\| .
    \end{align}

    To prove (\ref{induction}), we proceed by induction. When $i=1,$ (\ref{ineq7}) yields: $$\|y_1(X)-y_{1,Q}(X)\| \le A(\lambda^2+1)^0\|X\|+B\|y_0(X)-y_{0,Q}(X)\|=A\|X\|.$$
    
    Therefore, (\ref{induction}) holds when $i=1.$ Now, let's assume that (\ref{induction}) holds for $i=t.$ When $i=t+1,$ by (\ref{ineq7}) and induction hypothesis, we have
    \begin{gather*}
        \|y_{t+1}(X)-y_{t+1,Q}(X)\| \le A(\lambda^2+1)^t\|X\|+B\|y_t(X)-y_{t,Q}(X)\| \\
        \le A\|X\| \{ (\lambda^2+1)^t+B\sum_{j=0}^{t-1} (\lambda^2+1)^j B^{t-1-j} \} = A\|X\|\sum_{j=0}^{t} (\lambda^2+1)^j B^{t-j}.
    \end{gather*}
    Hence, (\ref{induction}) holds. Note that $g(x)=y_n(x)$ and $g_Q(x)=y_{n,Q}(x).$ Taking $i=n$ in (\ref{induction}) completes the proof.
\end{proof}

Theorem \ref{main_rnn} shows that using smaller step size $\delta$ and larger $N$ can generate a quantized neural network with higher accuracy. If we treat $\delta$ as a fixed constant, we can view the upper bound in (\ref{rnnbound}) as $O(N^{-1/k})\|X\|$. This is the result for general FUNTFs. If we use a family of frames with uniformly bounded variation like harmonic frames, then one can also prove that the upper bound in (\ref{rnnbound}) is $O(N^{-1})\|X\|$ if we treat $\delta$ as a fixed constant.

\section{Numerical Results}\label{numresults}
In this section, we present numerical results which demonstrate the performance of our proposed algorithm.  We analyze the accuracy of frame quantized networks for a classification task with the MNIST dataset. For our implementation, we used the same $K$ and step size $\delta$ for quantizing each layer as well as the residual blocks. The nonlinear activation function $\sigma$ is ReLU. We used harmonic frames $H^d_N$ and identity permutation for Algorithm \ref{framequant}. We trained each neural network 10 times and quantized it with different values of $N$ and $\delta,$ where $N$ is the number of elements in harmonic frames. We report the average accuracy and standard deviations.

\subsection{Feedforward Network with 3 Layers}
FNN in our experiment is $f(x)=h^{[3]}\circ \sigma \circ h^{[2]} \circ \sigma \circ h^{[1]} (x)$ where $h^{[1]}:\mathbb{R}^{784}\rightarrow\mathbb{R}^{256}, h^{[2]}:\mathbb{R}^{256}\rightarrow\mathbb{R}^{256},$ and $h^{[3]}:\mathbb{R}^{256}\rightarrow\mathbb{R}^{10}$ are affine maps in as in (\ref{fnn}) without bias terms.  Our results are presented in Table \ref{table:1}. As expected, quantizing with smaller $\delta$'s and larger $N$'s results in higher accuracy.

\begin{table}[ht]
\centering
\begin{tabular}{|c| c| c| c|c|c|} 
 \hline
 $N$ & $\delta=1/16$ & $\delta=1/8$ & $\delta=1/4$ & $\delta=1/2$ & $\delta=1$  \\ 
 \hline
 256 & $97.53\pm0.24\%$ & $97.03\pm0.35\%$& $93.39\pm1.92\%$ & $63.76\pm4.54\%$ & $22.85\pm3.67\%$ \\ 
 \hline
 320 & $97.62\pm0.19\%$ & $97.35\pm0.29\%$& $95.47\pm1.34\%$ & $84.93\pm4.66\%$ & $50.30\pm8.24\%$ \\
 \hline
 384 & $97.65\pm0.25\%$ & $97.46\pm0.28\%$& $95.99\pm1.59\%$ & $90.68\pm3.80\%$ & $62.48\pm6.54\%$ \\
 \hline
 448 & $97.68\pm0.22\%$ & $97.55\pm0.24\%$& $96.92\pm0.59\%$ & $93.75\pm2.03\%$ & $76.66\pm5.25\%$ \\
 \hline
 512 & $97.68\pm0.25\%$ & $97.57\pm0.25\%$& $97.20\pm0.30\%$ & $95.71\pm1.36\%$ & $86.97\pm1.92\%$ \\
 \hline
\end{tabular}
\caption{Frame Quantization for FNN with 3 layers. The accuracy for the trained neural network is $97.72\pm 0.21\%$.}
\label{table:1}
\end{table}

We now demonstrate the connection between theory and numerical results. Since we are using harmonic frames with the same $N$ and $\delta$ for quantizing each layer, the error bound (\ref{harmonicerror}) can be represented as $\|f(X)-f_Q(X)\| \le C\delta N^{-1}\|X\|,$ where $C$ is a positive constant depending only on the FNN $f$. In Figure \ref{picture1} we present the worst-case error. The worst-case error decreases as $N$ increases or $\delta$ decreases, but it may be large if the worst-case scenario corresponds to large $\|X\|$. To remove the dependence on $X$ and to focus on the dependence of $N$ and $\delta,$ we compute the average and represent the error bound as $\mathrm{E}_{X}\|f(X)-f_Q(X)\| \le C\times\mathrm{E}\|X\|\times\delta N^{-1}.$ Note that $\mathrm{E}\|X\|$ is now a constant depending only on the MNIST dataset, so by taking logarithm on both sides, we can write $\log \mathrm{E}_{X}[\|f(X)-f_Q(X)\|\times N/\delta] \le K$, for some constant $K>0$. In Figure \ref{picture2}, we present the values of $\log \mathrm{E}_{X}[\|f(X)-f_Q(X)\|\times N/\delta].$ We can see that the values are bounded and behave approximately constant as $N$ grows. These numerical results imply that our theoretical error bound is tight in terms of $\delta$ and $N.$

\begin{figure}[ht]
    \centering
    \begin{minipage}{0.49\textwidth}
        \centering
        \includegraphics[width=1\textwidth]{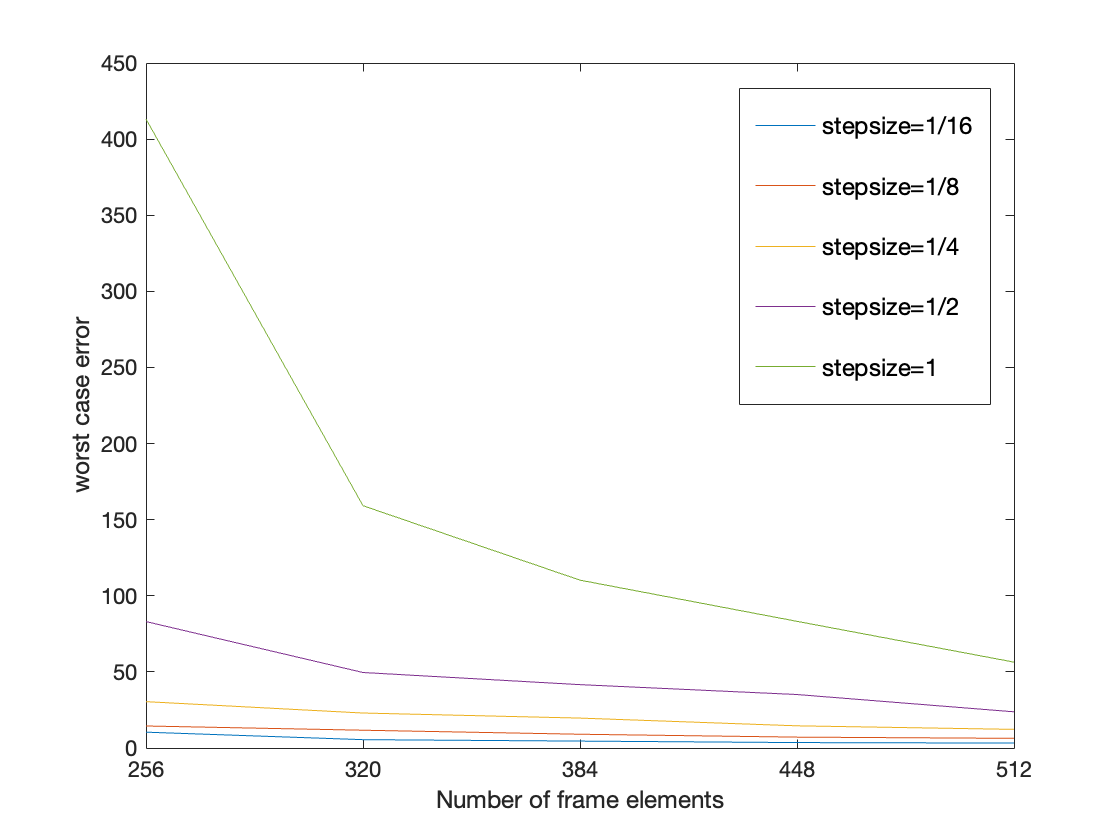} 
        \caption{Worst-case error $\|f(X)-f_Q(X)\|$ for FNN with 3 layers.}
        \label{picture1}
    \end{minipage}\hfill
    \begin{minipage}{0.49\textwidth}
        \centering
        \includegraphics[width=1\textwidth]{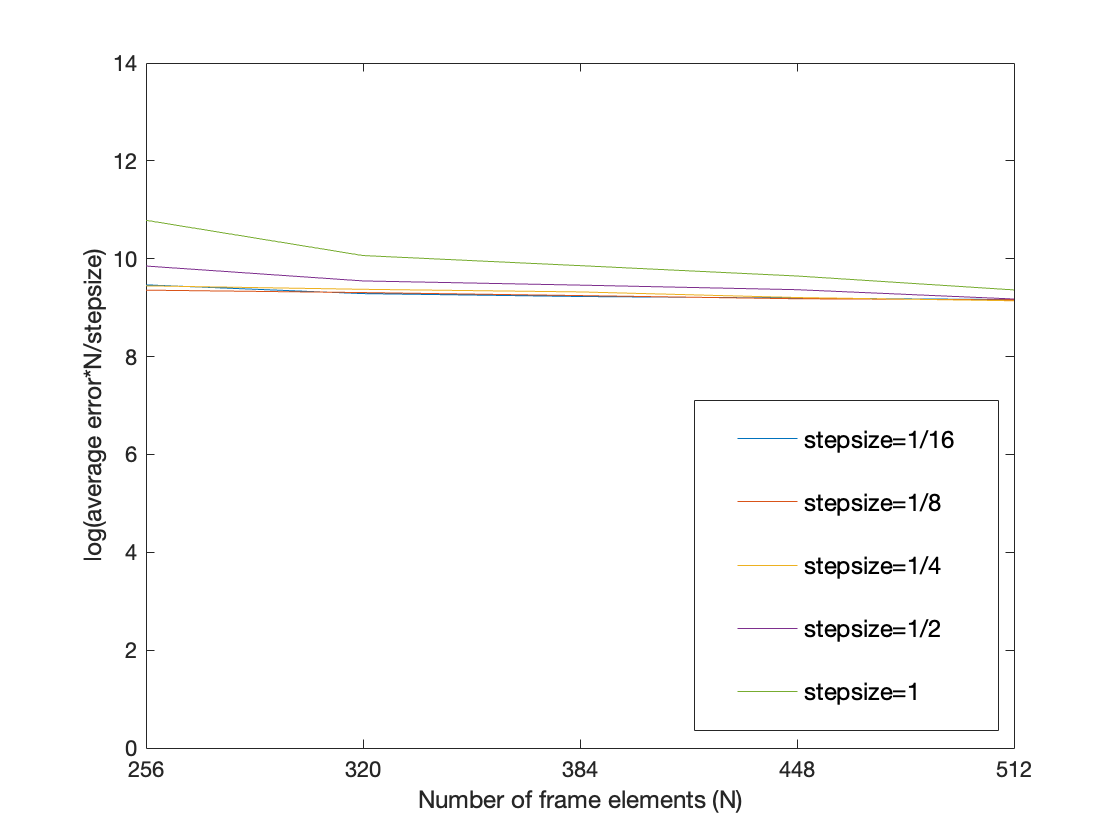} 
        \caption{$\log \mathrm{E}_{X}[\|f(X)-f_Q(X)\|\times N/\delta]$ for FNN with 3 layers.} 
        \label{picture2}
    \end{minipage}
\end{figure}

\subsection{Network with 2 Residual Blocks}
 Neural network in our experiment has a form of $g(x)=h^{[2]}\circ \sigma \circ z^{[2]}\circ \sigma \circ z^{[1]}\circ \sigma \circ h^{[1]}(x)$ where $h^{[1]}:\mathbb{R}^{784}\rightarrow \mathbb{R}^{256}$ and $h^{[2]}:\mathbb{R}^{256}\rightarrow\mathbb{R}^{10}$ are affine maps in (\ref{fnn}), and $z^{[1]},z^{[2]}:\mathbb{R}^{256}\rightarrow\mathbb{R}^{256}$ are residual blocks in (\ref{residual}). Our results are listed in Table \ref{table:2}. Quantizing with smaller $\delta$'s and larger $N$'s results in higher accuracy.

\begin{table}[ht!]
\centering
\begin{tabular}{|c| c| c| c|c|c|} 
 \hline
 $N$ & $\delta=1/16$ & $\delta=1/8$ & $\delta=1/4$ & $\delta=1/2$ & $\delta=1$  \\ 
 \hline
 256 & $97.44\pm0.28\%$ & $96.17\pm0.67\%$& $77.20\pm6.23\%$ & $15.61\pm2.89\%$ & $9.68\pm2.55\%$ \\ 
 \hline
 320 & $97.52\pm0.32\%$ & $97.09\pm0.35\%$& $92.54\pm1.19\%$ & $41.03\pm6.94\%$ & $11.67\pm2.58\%$ \\
 \hline
 384 & $97.60\pm0.19\%$ & $97.23\pm0.27\%$& $95.12\pm0.62\%$ & $66.27\pm8.36\%$ & $17.84\pm2.83\%$ \\
 \hline
 448 & $97.64\pm0.22\%$ & $97.37\pm0.28\%$& $96.35\pm0.42\%$ & $85.28\pm3.53\%$ & $28.96\pm6.04\%$ \\
 \hline
 512 & $97.66\pm0.20\%$ & $97.54\pm0.25\%$& $97.00\pm0.29\%$ & $92.30\pm3.11\%$ & $45.80\pm10.09\%$ \\
 \hline
\end{tabular}
\caption{Frame Quantization for Neural Network with 2 Residual Blocks. The accuracy for the trained neural network is $97.72\pm0.20\%$.}
\label{table:2}
\end{table}

\subsection{1-Bit Quantization}
1-bit quantization of neural network, in other words, \textit{binarization of neural network}, refers to a method of compressing weights or even activations into 1-bit numbers. For example, \cite{courbariaux2015binaryconnect} quantized the weights of the neural networks to $\{-1,1\}.$ Mathematical theories were also developed in 1-bit $\Sigma\Delta$ quantization for bandlimited signals \cite{gunturk2003one} and approximation capabilities of 1-bit neural networks \cite{gunturk2021approximation}.

In our method, since we are storing $q^i_{jk}$'s appearing in the quantized expansions, we will say 1-bit quantization is obtained when all $q^i_{jk}$'s are in $\{-a,a\}$ for some positive number $a>0.$ Note that once we choose appropriate $K\in \mathbb{N}$ and step size $\delta>0,$ our method forces $q^i_{jk}$'s as elements of the midrise quantization alphabet $A^{\delta}_K$ (\ref{alphabet}). If we recall the structure of $A^{\delta}_K=\{(-K+\frac{1}{2})\delta,(-K+\frac{3}{2})\delta,\cdots,-\frac{1}{2}\delta,\frac{1}{2}\delta,\cdots,(K-\frac{1}{2})\delta\},$ one can easily see that $A^{\delta}_K$ achieves a form of $\{-a,a\}$ if and only if $K=1.$ 

Therefore in our experiments for 1-bit quantization, we set $K=1$ for all layers and found the uniform step size $\delta$ that satisfies (\ref{howtochoosedelta}) for $i=1,\cdots,n.$ For both neural networks in the previous experiments, setting $\delta=8$ enables us to set $K=1$. Our 1-bit quantization results are in Table \ref{table:3} and Table \ref{table:4}.

Now we shall explain how our algorithm can benefit in storage while maintaining the accuracy. Consider 1-bit quantization for a neural network with 3 layers. We use a harmonic frame $H^{256}_{N}$ to quantize the first and second weight matrices $W_1$ and $W_2,$ the total bits that we need to store the quantized matrices $Q_1$ and $Q_2$ is $1\times784\times N + 1\times 256 \times N=1040N$ bits. The total bits that we need to represent the weight matrices $W_1$ and $W_2$ are $32\times256\times784+32\times256\times256=8192\times1040$ bits. Suppose that we use $N=7000$. Then the total bits that we can save is $8192\times1040-1040N=1192\times1040$ bits, but we have an average accuracy of $97.29\%$ for the quantized networks, where the average accuracy for the trained neural networks is $97.72\%.$ So we can even benefit in storage for a 1-bit quantization case with a small sacrifice in the accuracy. 
\begin{table}[ht!]
    \centering
    \begin{tabular}{|c|c|c|c|c|c|c|c|}
        \hline
          $N$ &1000  & 2000 & 3000 & 4000 & 5000 & 6000 & 7000\\
         \hline
         Average & 36.18\% & 74.09\% & 88.47\% & 94.62\% & 96.04\% & 96.83\% & 97.29\% \\
         \hline
         Standard Deviation &7.65 & 4.71 & 1.82 & 0.44 & 0.82 & 0.36 & 0.25\\
         \hline
    \end{tabular}
    \caption{1-Bit Frame Quantization for FNN with 3 layers}
    \label{table:3}
\end{table}
\vspace{-3em}
\begin{table}[ht!]
    \centering
    \begin{tabular}{|c|c|c|c|c|c|c|c|}
        \hline
          $N$ &1000  & 2000 & 3000 & 4000 & 5000 & 6000 & 7000\\
         \hline
         Average & 
         14.12\% & 31.52\% & 66.93\% & 86.47\% & 92.88\% & 95.28\% & 96.30\% \\
         \hline
         Standard Deviation & 2.52 & 6.56 & 5.69 & 4.04 & 1.44 & 0.64 & 0.50 \\
         \hline
    \end{tabular}
    \caption{1-Bit Frame Quantization for Neural Network with 2 Residual Blocks}
    \label{table:4}
\end{table}

\section{Conclusions and Future Work}

In this paper, we proposed a PTQ method with a rigorous mathematical analysis of the error estimates between a neural network and its quantized network. We use frame theory in neural network quantization for the first time. Also, we demonstrate that 1-bit quantization with our method has a benefit in storage while attaining accuracy near the accuracy of the original neural network. But still, there are many open questions left. Here, we list some of them.
\begin{enumerate}
    \item In this paper we only focus on using FUNTFs for $\mathbb{R}^d.$ Designing a quantization method using different types of frames would be a new task for the future.
    \item Our method uses first-order $\Sigma\Delta$ quantization, but one may consider using higher-order $\Sigma\Delta$ quantization. In this case, how much can we benefit from the error bound in terms of the number of frame elements? Also, which quantization rule should be used for higher-order schemes?
    \item To the best of our knowledge, this is the first attempt to adopt frame theory to neural network quantization. We were able to explain the accuracy and storage benefits, but we must admit we do not immediately see a connection between using frames and saving inference time. To develop a new frame quantization algorithm that reduces inference time, we may need to use some specific frame.
\end{enumerate}

    


\end{document}